\newcommand{\real}{\mathbb{R}}
\newcommand{\T}{\intercal}
\newcommand{\seq}[1]{[#1]}
\newcommand{\sign}{{\rm{sign}}}
\newcommand{\prob}{\mathbb{P}}
\newcommand{\E}{\mathbb{E}}
\newcommand{\calO}{\mathcal{O}}
\newcommand{\sym}{\mathbb{S}^d_+}
\newcommand{\be}{\mathbf{e}}
\newcommand{\bH}{H}
\newcommand{\bHhat}{\widehat{\bH}}
\newcommand{\bzero}{\mathbf{0}}
\newcommand{\inner}[2]{\langle #1, #2 \rangle}
\newcommand{\diag}{{\rm{diag}}}
\newcommand{\eig}{{\rm{eig}}}
\newcommand{\diff}[2]{\frac{\partial{#1}}{\partial{#2}}}
\newcommand{\inertia}{{\rm{In}}}
\newcommand{\tr}{{\rm{trace}}}
\newcommand{\calI}{\mathcal{I}}
\theoremstyle{plain}
\newtheorem{theorem}{Theorem}[section]
\newtheorem{lemma}[theorem]{Lemma}
\theoremstyle{definition}
\newtheorem{definition}[theorem]{Definition}
\newtheorem{assumption}[theorem]{Assumption}
\theoremstyle{remark}
\title{Sparse Mixed Linear Regression with Guarantees: Taming an Intractable Problem with Invex Relaxation}
\date{}
\author{%
	Adarsh Barik \\
	Department of Computer Science\\
	Purdue University\\
	West Lafayette, Indiana, USA\\
	\texttt{abarik@purdue.edu} \\
	% examples of more authors
	\and
	Jean Honorio \\
	Department of Computer Science \\
	Purdue University \\
	West Lafayette, Indiana, USA\\
	\texttt{jhonorio@purdue.edu} \\
}
\begin{document}
\maketitle

\begin{abstract}
In this paper, we study the problem of sparse mixed linear regression on an unlabeled dataset that is generated from linear measurements from two different regression parameter vectors. Since the data is unlabeled, our task is not only to figure out a good approximation of the regression parameter vectors but also to label the dataset correctly. In its original form, this problem is NP-hard. The most popular algorithms to solve this problem (such as Expectation-Maximization) have a tendency to stuck at local minima. We provide a novel invex relaxation for this intractable problem which leads to a solution with provable theoretical guarantees. This relaxation enables exact recovery of data labels. Furthermore, we recover a close approximation of the regression parameter vectors which match the true parameter vectors in support and sign. Our formulation uses a carefully constructed primal dual witnesses framework for the invex problem. Furthermore, we show that the sample complexity of our method is only logarithmic in terms of the dimension of the regression parameter vectors.
\end{abstract}

\section{Introduction}
\label{sec:introduction}

In this paper, we study sparse mixed linear regression where the measurements come from one of the two regression models depending upon the unknown label $z_i^* \in \{0, 1\}$. The observation model can be described as follows:
\begin{align}
\label{eq:mixed linear regression}
\begin{split}
	y_i = z^*_i \inner{X_i}{\beta_1^*} + (1 - z^*_i) \inner{X_i}{\beta_2^*} + e_i, \forall i \in \{ 1,\cdots, n \} \; ,
\end{split}
\end{align}
where $X_i \in \real^d, y_i \in \real$ and $e_i \in \real$ is independent additive noise. The regression parameter vectors $\beta_1^* \in \real^d$, $\beta_2^* \in \real^d$ are sparse vectors with possibly non-overlapping supports. 

Mixed linear regression models have been extensively used in a wide range of applications~\citep{grun2007applications} which include but are not limited to behavioral health-care~\citep{deb2000estimates}, market segmentation~\citep{wedel2000market}, music perception studies~\citep{viele2002modeling} and vehicle merging~\citep{li2019modeling}. The main task of the problem is to estimate the regression parameter vectors and the unknown labels accurately from linear measurements. However, the problem is NP-hard without any assumptions~\citep{yi2014alternating}. Being such a difficult problem, it also lends itself to be used as a benchmark for many non-convex optimization algorithms~\citep{chaganty2013spectral,klusowski2019estimating}.

\paragraph{Related Work.} 
There have been many approaches to solve the mixed linear regression problem after it was introduced by~\cite{wedel1995mixture}. The most popular and natural approach has been to use Expectation-minimization (EM) based alternate minimization algorithms (see \citet{ghosh2020alternating} and references therein). More broadly, the problem can be modeled under the hierarchical mixtures of experts model~\citep{jordan1994hierarchical} and solved using EM based algorithms. All these methods run the risk of getting stuck at local minima~\citep{wu1983convergence} without good initialization. \cite{yi2014alternating} provides a good initialization for the noiseless case under strict technical conditions, however their method does not provide any guarantees for the noisy case. Based on the recent work of~\cite{anandkumar2014tensor} and~\cite{hsu2013learning},~\citep{chaganty2013spectral} have proposed an approach which uses a third order moment method based on tensor decomposition. Their approach suffers from high sample complexity (up to $\calO(d^6)$) due to tensor decomposition. \cite{stadler2010l} proposed an $\ell_1$-regularized approach for the sparse case and showed the existence of a local minimizer with correct support but there are no guarantees that EM achieves this local minima. \cite{chen2014convex} provided a convex relaxation involving nuclear norms for the problem. They do not focus on providing guarantees for exact label recovery and their results only hold for bounded noise  and require balanced samples (almost equal number of samples for both labels). Besides, the optimization problems involving nuclear norms are computationally heavy and slow. The mixed linear regression problem can also be modeled as a subspace clustering problem. But typically these problems require $\calO(d^2)$ measurements to have a unique solution~\citep{vidal2005generalized,elhamifar2013sparse}.

\paragraph{Contribution.} 

Broadly, we can categorize our contribution in the following points:
\begin{itemize}
\item \textbf{A Combinatorial Problem}: We view the problem as a combinatorial version of a mixture of sparse linear regressions. The exact label recovery is as important for us as the recovery of regression vectors. This added exact label recovery guarantee comes at no extra cost in terms of the performance.
\item \textbf{Invex Relaxation}: We solve a non-convex problem which is known to be intractable. We propose a novel relaxation of the combinatorial problem and formally show that this relaxation is invex. 
\item \textbf{Theoretical Guarantees}:  Our method solves two sparse linear regressions and a label recovery problem simultaneously with theoretical guarantees. To that end, we recover the true labels and sparse regression parameter vectors which are correct up to the sign of entries with respect to the true parameter vectors. As a side product, we propose a novel primal-dual witness construction for our invex problem and provide theoretical guarantees for recovery. The sample complexity of our method only varies logarithmically with respect to dimension of the regression parameter vector.
\item \textbf{A Novel Framework}: It should be noted that we are providing a novel framework (not an algorithm) to solve the problem. This opens the door for many algorithms to be used for this problem.         
\end{itemize}    

\section{Problem Setup}
\label{sec:problem setup}

In this section, we collect the notations used throughout the paper and define our problem formally. We consider a problem where measurements come from a mixture of two linear regression problem. Let $y_i \in \real $ be the response variable and $X_i \in \real^d$ be the observed attributes. Let $z^*_i \in \{0, 1\}$ denote the unknown label associated with measurement $i$. The response $y_i$ is generated using the observation model \eqref{eq:mixed linear regression} where $e_i \in \real$ is an independent noise term. We collect a total of $n$ linear measurements with $n_1$ measurements belonging to label $1$ and $n_2$ measurements belonging to label $0$. Clearly, $n = n_1 + n_2$. We take $\| \beta_1^* \|_1 \leq b_1$ and $\| \beta_2^* \|_1 \leq b_2$.  

Let $\seq{d}$ denote the set $\{1, 2, \cdots, d\}$. We assume $X_i \in \real^d$ to be a zero mean sub-Gaussian random vector~\citep{hsu2012tail} with covariance $\Sigma \in \sym$, i.e., there exists a $\rho > 0$, such that for all $\tau \in \real^d$ the following holds: $	\E(\exp(\tau^\T X_i)) \leq \exp(\frac{\| \tau \|_2^2 \rho^2}{2}) $. By simply taking $\tau_j = r$ and $\tau_k = 0, \forall k \ne j$, it follows that each entry of $X_i$ is sub-Gaussian with parameter $\rho$. In particular, we will assume that $\forall j \in \seq{d}\, , \frac{X_{ij}}{\sqrt{\Sigma_{jj}}}$ is a sub-Gaussian random variable with parameter $\sigma > 0$. It follows trivially that $\max_{j \in \seq{d}} \sqrt{\Sigma_{jj}} \sigma \leq \rho$.  We will further assume that $e_i$ is zero mean independent sub-Gaussian noise with variance $\sigma_e$. Our setting works with a variety of random variables as the class of sub-Gaussian random variable includes for instance Gaussian variables, any bounded random variable (e.g., Bernoulli, multinomial, uniform), any random variable with strictly log-concave density, and any finite mixture of sub-Gaussian variables. 

The parameter vectors $\beta_1^* \in \real^d$ and $\beta_2^* \in \real^d$ are $s_1$-sparse and $s_2$-sparse respectively, i.e., at most $s_1$ entries of $\beta_1^*$ are non-zero whereas at most $s_2$ entries of $\beta_2^*$ are non-zero. We receive $n$ i.i.d. samples of $X_i \in \real^d$ and $y_i \in \real$ and collect them in $X \in \real^{n \times d}$  and $y \in \real^n$ respectively. Similarly, $z^* \in \{0, 1 \}^n$ collects all the labels. Our goal is to recover $\beta_1^*, \beta_2^*$ and $z^*$ using the samples $(X, y)$. 

We denote a matrix $A \in \real^{p \times q}$ restricted to the columns and rows in $P \subseteq \seq{p}$ and $Q \subseteq \seq{q}$ respectively as $A_{PQ}$. Similarly, a vector $v \in \real^p$ restricted to entries in $P$ is denoted as $v_P$. We use $\eig_i(A)$ to denote the $i$-th eigenvalue ($1$st being the smallest) of matrix $A$. Similarly, $\eig_{\max}(A)$ denotes the maximum eigenvalue of matrix $A$. We use $\diag(A)$ to denote a vector containing the diagonal element of matrix $A$. By overriding the same notation, we use $\diag(v)$ to denote a diagonal matrix with its diagonal being the entries in vector $v$. We denote the inner product between two matrices (or vectors) $A$ and $B$ by $\inner{A}{B}$, i.e., $\inner{A}{B} = \tr(A^\T B)$, where $\tr$ denotes the trace of a matrix. The notation $A \succeq B$ denotes that $A - B$ is a positive semidefinite matrix. Similarly, $A \succ B$ denotes that $A-B$ is a positive definite matrix. For vectors, $\| v \|_p$ denotes the $\ell_p$-vector norm of vector $v \in \real^d$, i.e., $\| v \|_p = ( \sum_{i=1}^d |v_i|^p)^{\frac{1}{p}}$. If $p = \infty$, then we define $\| v \|_{\infty} = \max_{i=1}^d |v_i|$. As is the tradition, we used $\| v \|_0$ to denote number of non-zero entries on vector $v$. It should be remembered that $\ell_0$ is not a proper vector norm. For matrices, $\| A \|_p$ denotes the induced $\ell_p$-matrix norm for matrix $A \in \real^{p \times q}$. In particular, $\| A \|_2$ denotes the spectral norm of $A$ and $\| A \|_{\infty} \triangleq \max_{i \in \seq{p}} \sum_{j=1}^q |A_{ij}|$. For a matrix $A \in \real^{p \times q}$,  $A(:) \in \real^{pq}$ denotes a vector which collects all entries of the matrix $A$. We define an operator $\sign(A)$ for a matrix(or vector) $A$, which returns a matrix (or a vector) with entries being the sign of the entries of $A$. A function $f(x)$ is of order $\Omega(g(x))$ and denoted by $ f(x) = \Omega(g(x))$, if there exists a constant $C > 0$ such that for big enough $x_0$, $f(x) \geq C g(x), \forall x \geq x_0$. Similarly, a function $f(x)$ is of order $\calO(g(x))$ and denoted by $ f(x) = \calO(g(x))$, if there exists a constant $C > 0$ such that for big enough $x_0$, $f(x) \leq C g(x), \forall x \geq x_0$. For brevity in our notations, we treat any quantity independent of $d, s$ and $n$ as constant. Detailed proofs for lemmas and theorems are available in the supplementary material.

\section{A Novel Invex Relaxation}
\label{sec:a novel invex relaxation}

In this section, we introduce a combinatorial formulation for mixed linear regression (MLR) and propose a novel invex relaxation for this problem. Since the measurements come from a true observation model \eqref{eq:mixed linear regression}, we can write the following optimization problem to estimate $\beta_1^*, \beta_2^*$ and $z^*$. 
\begin{definition}[Standard MLR]
	\label{def:standard mlr}
	\begin{align}
	\label{eq:opt prob standard mlr}
	\begin{split}
	\begin{matrix}
	\min\limits_{\beta_1 \in \real^d, \beta_2 \in \real^d, z \in \{0, 1\}^n} &  l(z, \beta_1, \beta_2) \\
	\text{such that} & \| \beta_1 \|_0 = s_1, \; \| \beta_2 \|_0 = s_2
	\end{matrix} 
	\end{split}
	\end{align}
	where $l(z, \beta_1, \beta_2) = \frac{1}{n}  \sum_{i=1}^n z_i (y_i - X_i^\T \beta_1)^2 + (1 - z_i) (y_i - X_i^\T \beta_2)^2$.
\end{definition}

Even without constraints, optimization problem \eqref{eq:opt prob standard mlr} is a non-convex NP-hard problem~\citep{yi2014alternating} in its current form. In fact, a continuous relaxation of $z \in [0, 1]^n$ does not help and it still remains a non-convex problem (See Appendix \ref{sec:std mlr nonconvex}). Furthermore, the sparsity constraints make it even difficult to solve. To deal with this intractability, we come up with a novel invex relaxation of the problem.

For ease of notation, we define the following quantities:
\begin{align}
\label{eq:Si and W}
\begin{split}
	S_i &= \begin{bmatrix} X_i \\ -y_i \end{bmatrix} \begin{bmatrix} X_i^\T & -y_i \end{bmatrix} = \begin{bmatrix} X_i X_i^\T & -  X_i y_i\\ -y_i X_i^\T & y_i^2 \end{bmatrix}  ,
\end{split}
\end{align}
We provide the following invex relaxation to the optimization problem \eqref{eq:opt prob standard mlr}.
\begin{definition}[Invex MLR]
	\label{def:invex mlr}
	\begin{align}
	\label{eq:opt prob invex mlr}
	\begin{split}
	\begin{matrix}
	\min\limits_{t, W, U} & f(t, W, U) + \lambda_1 g(t, W, U) + \lambda_2 h(t, W, U) \\
	\text{such that} & \\
	& W \succeq \mathbf{0}, \; U \succeq \mathbf{0} \\
	& W_{d+1, d+1} = 1, \;  U_{d+1, d+1} = 1 \\
	& \| t \|_{\infty} \leq 1 
	\end{matrix} \, .
	\end{split}
	\end{align}
	where $ f(t, W, U) = \sum_{i=1}^n \frac{1}{2} \inner{S_i}{W + U} + \sum_{i=1}^n \frac{1}{2} t_i \inner{S_i}{W - U}, g(t, W, U) = \|W(:)\|_1$ and $h(t, W, U) = \| U(:)\|_1$ and $\lambda_1$ and $\lambda_2$ are positive regularizers.
\end{definition}

To get an intuition behind this formulation, one can think of $W$ and $U$ as two rank-1 matrices which are defined as follows:
\begin{align}
\begin{split}
		W &= \begin{bmatrix} \beta_1 \\ 1 \end{bmatrix} \begin{bmatrix} \beta_1^\T & 1 \end{bmatrix}, \; U = \begin{bmatrix} \beta_2 \\ 1 \end{bmatrix} \begin{bmatrix} \beta_2^\T & 1 \end{bmatrix} 
\end{split}
\end{align}

The variable $t$ is simply a replacement of variable $z$, i.e., $z_i = \frac{t_i + 1}{2}$. Then after substituting $t, W$ and $U$ in $f(t, W, U)$, we get back $l(z, \beta_1, \beta_2)$. The $\ell_1$-regularization of $W(:)$ and $U(:)$ helps us ensure sparsity.
Note that for fixed $t$, optimization problem \eqref{eq:opt prob invex mlr} is continuous and convex with respect to $W$ and $U$. Specifically, it merges two independent regularized semidefinite programs. Unfortunately, problem \eqref{eq:opt prob invex mlr} is not jointly convex on $t, W$ and $U$, and thus, it might still remain difficult to solve. Next, we will provide arguments that despite being non-convex, optimization problem \eqref{eq:opt prob invex mlr} belongs to a particular class of non-convex functions namely ``invex'' functions. The ``invexity'' of functions can be defined as a generalization of convexity~\citep{hanson1981sufficiency}. Invexity has been recently used by~\cite{barik2021fair} to solve fair sparse regression problem with clustering. While, we borrow some definitions from their work to suit our context, we should emphasize that our problem is fundamentally different than their problem. They use two groups in sparse regression which have different means and they try to achieve fairness. While here, we have two groups with the same mean and there is no unfairness in the problem. We also model our parameter vectors with positive semidefinite matrices which is fundamentally different from their approach. 

\begin{definition}[Invex function~\citep{barik2021fair}]
	\label{def:invex function}
	Let $\phi(t)$ be a function defined on a set $C$. Let $\eta$ be a vector valued function defined in $C \times C$ such that $\eta(t_1, t_2)^\T \nabla \phi(t_2)$, is well defined $\forall t_1, t_2 \in C$. Then, $\phi(t)$ is a $\eta$-invex function if $\phi(t_1) - \phi(t_2) \geq \eta(t_1, t_2)^\T \nabla \phi(t_2), \, \forall t_1, t_2 \in C$.
	%it satisfies the following definition:
	%	\begin{align*}
	%	\begin{split}
	%	\phi(t_1) - \phi(t_2) \geq \eta(t_1, t_2)^\T \nabla \phi(t_2), \, \forall t_1, t_2 \in C \, . 
	%	\end{split}
	%	\end{align*}  
\end{definition}

Note that convex functions are $\eta$-invex for $\eta(t_1,t_2) = t_1 - t_2$. \cite{hanson1981sufficiency} showed that if the objective function and constraints are both $\eta$-invex with respect to same $\eta$ defined in $C \times C$, then Karush-Kuhn-Tucker (KKT) conditions are sufficient for optimality, while it is well-known that KKT conditions are necessary. \cite{ben1986invexity} showed a function is invex if and only if each of its stationarity point is a global minimum. 

In the next lemma, we show that the relaxed optimization problem \eqref{eq:opt prob invex mlr} is indeed $\eta$-invex for a particular $\eta$ defined in $C \times C$ and a well defined set $C$. Let $C = \{ (t, W, U) \mid t \in [-1, 1], W \succeq \mathbf{0}, U \succeq \mathbf{0}, W_{d+1, d+1} = 1, U_{d+1, d+1} = 1 \}$.

\begin{lemma}
	\label{lem:invexity}
	For $(t, W, U) \in C$, the functions $ f(t, W, U) = \sum_{i=1}^n \frac{1}{2} \inner{S_i}{W + U} + \sum_{i=1}^n \frac{1}{2} t_i \inner{S_i}{W - U}, g(t, W, U) = \|W(:)\|_1$ and $h(t, W, U) = \| U(:)\|_1$ are $\eta$-invex for $\eta(t, \bar{t}, W, \overline{W}, U, \overline{U}) \triangleq \begin{bmatrix} \eta_t \\ \eta_{W} \\ \eta_{U}\end{bmatrix}$, where $\eta_t = \mathbf{0} \in \real^n, \eta_{W} = -\overline{W}$ and $\eta_{U} = - \overline{U}$. We abuse the vector/matrix notation (by ignoring the dimensions) for clarity of presentation, and avoid the vectorization of matrices.
\end{lemma}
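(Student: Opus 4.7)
The plan is to verify the invexity inequality $\phi(t,W,U) - \phi(\bar{t},\overline{W},\overline{U}) \geq \eta^\T \nabla \phi(\bar{t},\overline{W},\overline{U})$ directly for each of the three functions, exploiting the specific choice $\eta_t = \bzero$, $\eta_W = -\overline{W}$, $\eta_U = -\overline{U}$. The central observation is that this choice of $\eta$ collapses the right-hand side to $-\phi(\bar{t},\overline{W},\overline{U})$, so the invexity inequality reduces to the nonnegativity statement $\phi(t,W,U) \geq 0$ on $C$.

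First I would compute the (sub)gradients of $f$ by rewriting
\begin{equation*}
f(t,W,U) \;=\; \sum_{i=1}^n \tfrac{1}{2}(1+t_i)\inner{S_i}{W} + \sum_{i=1}^n \tfrac{1}{2}(1-t_i)\inner{S_i}{U},
\end{equation*}
which gives $\nabla_W f = \sum_i \tfrac{1}{2}(1+\bar{t}_i) S_i$, $\nabla_U f = \sum_i \tfrac{1}{2}(1-\bar{t}_i) S_i$, and $\nabla_{t_i} f = \tfrac{1}{2}\inner{S_i}{\overline{W}-\overline{U}}$ when evaluated at $(\bar{t},\overline{W},\overline{U})$. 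Plugging in $\eta$ and using $\inner{-\overline{W}}{\nabla_W f} + \inner{-\overline{U}}{\nabla_U f}$, a straightforward expansion yields $\eta^\T \nabla f(\bar{t},\overline{W},\overline{U}) = -f(\bar{t},\overline{W},\overline{U})$. Hence invexity of $f$ reduces to showing $f(t,W,U)\geq 0$ for all $(t,W,U)\in C$. This is immediate from the observation that $S_i = \bigl[X_i^\T,\,-y_i\bigr]^\T \bigl[X_i^\T,\,-y_i\bigr] \succeq \bzero$ is rank-one PSD, so $\inner{S_i}{W}\geq 0$ and $\inner{S_i}{U}\geq 0$ whenever $W,U\succeq \bzero$, while $|t_i|\leq 1$ forces $1\pm t_i\geq 0$; each term in the sum is thus nonnegative.

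For $g$ and $h$, I would treat the $\ell_1$ norm via subgradients, since the definition only needs $\eta^\T \nabla\phi$ to be well-defined. The $t$- and $U$-components of $\nabla g$ vanish, so only $\inner{-\overline{W}}{\Phi}$ matters, where $\Phi$ is any subgradient of $\|\cdot(:)\|_1$ at $\overline{W}$. For the $\ell_1$ norm we always have $\inner{\Phi}{\overline{W}} = \|\overline{W}(:)\|_1$ (the subgradient coincides with the sign pattern on the support and is arbitrary in $[-1,1]$ off the support, where $\overline{W}$ is zero), so $\eta^\T\nabla g(\bar{t},\overline{W},\overline{U}) = -\|\overline{W}(:)\|_1$. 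The invexity inequality then becomes $\|W(:)\|_1 - \|\overline{W}(:)\|_1 \geq -\|\overline{W}(:)\|_1$, i.e.\ $\|W(:)\|_1 \geq 0$, which is trivial. The argument for $h$ is identical with $U$ in place of $W$.

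The only subtle point, and the part I expect to require the most care, is the $\ell_1$ subgradient identity $\inner{\Phi}{\overline{W}}=\|\overline{W}(:)\|_1$; everything else is bookkeeping. The conceptual takeaway is that the clever asymmetric choice $\eta_W=-\overline{W}$, $\eta_U=-\overline{U}$ is engineered precisely so that $\eta^\T\nabla\phi = -\phi$, reducing invexity to nonnegativity — and nonnegativity holds on $C$ because of the PSD structure of $S_i$, $W$, $U$ together with $|t_i|\leq 1$.
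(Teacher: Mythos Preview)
Your proposal is correct and follows essentially the same route as the paper: compute the gradients of $f$, observe that with the given $\eta$ the invexity inequality collapses to $f(t,W,U)\geq 0$, and verify this via the PSD structure of $S_i, W, U$ together with $|t_i|\leq 1$; then handle $g$ and $h$ by the subgradient identity $\inner{\Phi}{\overline{W}}=\|\overline{W}(:)\|_1$, reducing to nonnegativity of the $\ell_1$ norm. Your treatment of the $\ell_1$ subgradient is slightly more explicit than the paper's, but the argument is identical.
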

Now that we have established that optimization problem \eqref{eq:opt prob invex mlr} is invex, we are ready to discuss our main results in the next section.

\section{Main Results}
\label{sec:main results}

In this section, we present our main results along with the technical assumptions. Our main goal is to show that the solution to optimization problem \eqref{eq:opt prob invex mlr} recovers the labels $t^*$ exactly and also recovers a good approximation of $\beta_1^*$ and $\beta_2^*$. In that, we will show that the recovered $\beta_1$ and $\beta_2$ have the same support and sign as $\beta_1^*$ and $\beta_2^*$ respectively and are close to the true vectors in $\ell_2$-norm. But before that, we will describe a set of technical assumptions which will help us in our analysis.

\subsection{Assumptions}
\label{subsec:assumptions}

Our first assumption ensures that each sample can be assigned only one label. Formally,
\begin{assumption}[Identifiability]
	\label{assum:identifiability}
	For $i \in \seq{n},\; - \frac{1}{2}(y_i - X_i^\T \beta_1^*)^2 + \frac{1}{2} (y_i - X_i^\T \beta_2^*)^2 \geq \epsilon$ if $z_i^* = 1$ and $ \frac{1}{2}(y_i - X_i^\T \beta_1^*)^2 - \frac{1}{2}(y_i - X_i^\T \beta_2^*)^2 \geq \epsilon$ if $z_i^* = 0$ for some $\epsilon > 0$.
\end{assumption}
Clearly, if Assumption \ref{assum:identifiability} does not hold for sample $i$, then we can reverse the label of sample $i$ without increasing objective function of optimization problem~\eqref{eq:opt prob standard mlr}. Another equivalent way of expressing Assumption \ref{assum:identifiability} is as following: for $i \in \seq{n},\; \inner{S_i}{W^*} < \inner{S_i}{U^*}$ if $z_i^* = 1$ and $\inner{S_i}{W^*} > \inner{S_i}{U^*}$ if $z_i^* = 0$ where,
\begin{align}
\begin{split}
W^* &= \begin{bmatrix} \beta_1^* \\ 1 \end{bmatrix} \begin{bmatrix} {\beta_1^*}^\T & 1 \end{bmatrix}, \; U^* = \begin{bmatrix} \beta_2^* \\ 1 \end{bmatrix} \begin{bmatrix} {\beta_2^*}^\T & 1 \end{bmatrix} \; .
\end{split}
\end{align}

Let $P$ denote the support of $\beta_1^*$, i.e., $P = \{ i\, | \, {\beta_1^*}_i \ne 0, \, i \in \seq{d} \}$ and let $Q$ denote the support of $\beta_2^*$, i.e., $Q = \{ i\, | \, {\beta_2^*}_i \ne 0, \, i \in \seq{d} \}$. Similarly, we define their complement as $P^c = \{ i\, | \, {\beta_1^*}_i = 0, \, i \in \seq{d} \}$ and $Q^c = \{ i\, | \, {\beta_2^*}_i = 0, \, i \in \seq{d} \}$ . We take $|P| = s_1, |P^c| = d - s_1, |Q| = s_2$ and $|Q^c| = d - s_2$. For ease of notation, we define $H \triangleq \E(X_i X_i^\T) \forall i \in \seq{n}$. Let $\calI_1 \triangleq \{ i \, | z_i^* = 1, i \in \seq{n} \}$ and $\calI_2 \triangleq \{ i \, | z_i^* = 0, i \in \seq{n} \}$. We define $\widehat{H}_1 \triangleq \frac{1}{n_1} \sum_{i \in \calI_1} X_i X_i^\T$ and $\widehat{H}_2 \triangleq \frac{1}{n_2} \sum_{i \in \calI_2} X_i X_i^\T$. As our next assumption, we need the minimum eigenvalue of the population covariance matrix of $X$ restricted to rows and columns in $P$ (similarly in $Q$) to be greater than zero. 
\begin{assumption}[Positive Definiteness of Hessian]
	\label{assum:postive definite}
	$H_{PP} \succ \mathbf{0} $ and $H_{QQ} \succ \mathbf{0}$ or equivalently 
	\begin{align*}
	\min( \eig_{\min}(H_{PP}), \eig_{\min}(H_{QQ}) ) = C_{\min} > 0 .
\end{align*}
 We also assume that $\eig_{\max}(H) = C_{\max} > 0$. Note that $\max( \eig_{\max}(H_{PP}), \eig_{\max}(H_{QQ}) ) \leq C_{\max}$.  
\end{assumption}

In practice, we only deal with finite samples and not populations. In the next lemma, we will show that with a sufficient number of samples, a condition similar to Assumption \ref{assum:postive definite} holds with high probability in the finite-sample setting.
\begin{lemma}
	\label{lem:sample positive definite}
If Assumption \ref{assum:postive definite} holds and $n_1 = \Omega(\frac{s_1 + \log d}{C_{\min}^2})$ and $n_2 = \Omega(\frac{s_2 + \log d}{C_{\min}^2})$ , then 
\begin{align*}
\min( \eig_{\min}(\widehat{H}_{1_{PP}}), \eig_{\min}(\widehat{H}_{2_{QQ}}) ) \geq \frac{C_{\min}}{2}
\end{align*}
 and 
 \begin{align*}
 	\max( \eig_{\max}(\widehat{H}_{1_{PP}}), \eig_{\max}(\widehat{H}_{2_{QQ}}) ) \leq \frac{3C_{\max}}{2}
 \end{align*}
 with probability at least $1 - \calO(\frac{1}{d})$.
\end{lemma}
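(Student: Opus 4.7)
The plan is to reduce the lemma to a standard concentration inequality for sample covariance matrices of sub-Gaussian vectors and then to pass from a spectral-norm perturbation bound to eigenvalue bounds via Weyl's inequality.

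First, I would observe that the samples $\{X_i : i \in \calI_1\}$ are i.i.d.\ zero-mean sub-Gaussian vectors in $\real^d$ with covariance $H$, so the sub-vectors $\{X_{i,P} : i \in \calI_1\}$ are i.i.d.\ sub-Gaussian in $\real^{s_1}$ with covariance $H_{PP}$ and sub-Gaussian parameter at most $\rho$ (which we treat as a constant). Hence $\bHhat_{1,PP} = \frac{1}{n_1}\sum_{i\in\calI_1} X_{i,P} X_{i,P}^\T$ is precisely the empirical covariance of such vectors. A standard sub-Gaussian covariance concentration bound (e.g., Vershynin, Theorem 4.7.1 in \emph{High-Dimensional Probability}) yields that, for any $u > 0$, with probability at least $1 - 2\exp(-c u^2)$,
\begin{align*}
\| \bHhat_{1,PP} - H_{PP} \|_2 \;\leq\; C \rho^2 \, \max\!\left( \sqrt{\tfrac{s_1 + u^2}{n_1}},\; \tfrac{s_1 + u^2}{n_1} \right),
\end{align*}
where $C, c > 0$ are absolute constants.

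Second, I would set $u = \sqrt{(2/c)\log d}$ so that the failure probability becomes $O(1/d^2)$. Under the hypothesis $n_1 = \Omega((s_1 + \log d)/C_{\min}^2)$, choosing the hidden constant sufficiently large forces the right-hand side above to lie in the regime governed by the square-root term, and makes it at most $C_{\min}/2$. Weyl's inequality then yields, on this high-probability event,
\begin{align*}
\eig_{\min}(\bHhat_{1,PP}) \;\geq\; \eig_{\min}(H_{PP}) - \| \bHhat_{1,PP} - H_{PP} \|_2 \;\geq\; C_{\min} - \tfrac{C_{\min}}{2} \;=\; \tfrac{C_{\min}}{2},
\end{align*}
and, using $C_{\min} \leq C_{\max}$,
\begin{align*}
\eig_{\max}(\bHhat_{1,PP}) \;\leq\; \eig_{\max}(H_{PP}) + \tfrac{C_{\min}}{2} \;\leq\; C_{\max} + \tfrac{C_{\max}}{2} \;=\; \tfrac{3 C_{\max}}{2}.
\end{align*}

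Third, I would repeat the identical argument for $\bHhat_{2,QQ}$ using $n_2 = \Omega((s_2 + \log d)/C_{\min}^2)$, and finish by a union bound over the two events, which preserves probability $1 - \calO(1/d)$.

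The main obstacle is essentially bookkeeping: invoking the right sub-Gaussian covariance concentration bound in the right form and verifying that the stated sample complexity in $(s + \log d)/C_{\min}^2$ is enough for the perturbation to be $\leq C_{\min}/2$ in spectral norm. There is no conceptual difficulty beyond this; once a suitable concentration statement is in hand, Weyl's inequality supplies both eigenvalue bounds in one line each, and independence of the two events (one per cluster) makes the union bound trivial.
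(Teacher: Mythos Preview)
Your proposal is correct and follows essentially the same route as the paper: both reduce the claim to a spectral-norm concentration bound for the sample covariance of sub-Gaussian vectors (the paper cites Proposition~2.1 of \cite{vershynin2012close}, you cite the equivalent Theorem~4.7.1 in his book) and then pass to eigenvalue bounds via Weyl/Courant--Fischer. The only cosmetic difference is that you handle the $\eig_{\max}$ bound by noting $C_{\min}\le C_{\max}$, whereas the paper simply repeats the argument with $\epsilon = C_{\max}/2$.
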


As the third assumption, we will need to ensure that the variates outside the support of $\beta_1^*$ and $\beta_2^*$ do not exert lot of influence on the variates in the support of $\beta_1^*$ and $\beta_2^*$ respectively. For this, we use a technical condition commonly known as the mutual incoherence condition. It has been previously used in many problems related to regularized regression such as compressed sensing~\citep{wainwright2009sharp}, Markov random fields~\citep{ravikumar2010high}, non-parametric regression~\citep{ravikumar2007spam}, diffusion networks~\citep{daneshmand2014estimating}, among others. 
%First, we define the following quantities:
%\begin{align}
%\begin{split}
%&M = H + \frac{\lambda_1}{n_1} \sign(W^*),\; K  = H + \frac{\lambda_2}{n_2} \sign(U^*) \\
%&\widehat{M} = \widehat{H}_1 + \frac{\lambda_1}{n_1} \sign(W^*),\; \widehat{K}  = \widehat{H}_2 + \frac{\lambda_2}{n_2} \sign(U^*) 
%\end{split}
%\end{align}

\begin{assumption}[Mutual Incoherence]
	\label{assum:mutual incoherence condition}
	$\max( \| H_{P^c P} H_{PP}^{-1} \|_{\infty},  \| H_{Q^c Q} H_{QQ}^{-1} \|_{\infty}) \leq 1 - \xi$ for some $\xi \in (0, 1]$.
\end{assumption}
Again, we will show that with a sufficient number of samples, a condition similar to Assumption \ref{assum:mutual incoherence condition} holds in the finite-sample setting with high probability.
\begin{lemma}
	\label{lem:sample mutual incoherence condition}
If Assumption \ref{assum:mutual incoherence condition} holds and $n_1 = \Omega(\frac{s_1^3 (\log s_1 + \log d)}{\tau(C_{\min}, \xi, \sigma, \Sigma)})$ and $n_2 = \Omega(\frac{s_2^3 (\log s_2 + \log d)}{\tau(C_{\min}, \xi, \sigma, \Sigma)})$, then 
\begin{align*}
\max( \| \widehat{H}_{P^c P} \widehat{H}_{PP}^{-1} \|_{\infty},  \| \widehat{H}_{Q^c Q} \widehat{H}_{QQ}^{-1} \|_{\infty})  \leq 1 - \frac{\xi}{2}
\end{align*}
with probability at least $1 - \calO(\frac{1}{d})$ where $\tau(C_{\min}, \xi, \sigma, \Sigma)$ is a constant independent of $n_1, n_2, d, s_1$ and $s_2$.
\end{lemma}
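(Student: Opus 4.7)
The plan is to bound $\|\widehat{H}_{P^cP}\widehat{H}_{PP}^{-1} - H_{P^cP}H_{PP}^{-1}\|_{\infty}$ by $\xi/2$, so that the triangle inequality combined with the population incoherence bound (Assumption~\ref{assum:mutual incoherence condition}) immediately yields $\|\widehat{H}_{P^cP}\widehat{H}_{PP}^{-1}\|_\infty \leq 1-\xi/2$. The $(P,\beta_1^*,n_1)$ and $(Q,\beta_2^*,n_2)$ sides are symmetric, so I focus on the $P$-block and take a union bound at the end. The starting point is the algebraic identity
\begin{align*}
\widehat{H}_{P^cP}\widehat{H}_{PP}^{-1} - H_{P^cP}H_{PP}^{-1}
= (\widehat{H}_{P^cP} - H_{P^cP})\,\widehat{H}_{PP}^{-1}
- H_{P^cP}H_{PP}^{-1}\,(\widehat{H}_{PP} - H_{PP})\,\widehat{H}_{PP}^{-1},
\end{align*}
which reduces the problem to three ingredients: (i) entrywise concentration of $\widehat{H}_1 - H$, (ii) control of $\|\widehat{H}_{PP}^{-1}\|_\infty$ and $\|H_{PP}^{-1}\|_\infty$, and (iii) converting entrywise bounds into $\ell_\infty$ operator-norm bounds on $s_1$-wide blocks.

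For (i), since $X_{ij}/\sqrt{\Sigma_{jj}}$ is sub-Gaussian with parameter $\sigma$, each product $X_{ij}X_{ik}$ is sub-exponential, so a standard Bernstein-type tail bound followed by a union bound over the $\calO(d^2)$ entries yields $\|\widehat{H}_1 - H\|_{\max} \leq c\,\rho^2\sqrt{\log d/n_1}$ with probability at least $1-\calO(1/d)$ for a universal constant $c$. For (ii), Lemma~\ref{lem:sample positive definite} gives $\eig_{\min}(\widehat{H}_{1_{PP}})\geq C_{\min}/2$, and combining with the elementary inequality $\|M^{-1}\|_\infty \leq \sqrt{s}\,\|M^{-1}\|_2$ for any $s\times s$ matrix $M$ gives $\|\widehat{H}_{PP}^{-1}\|_\infty \leq 2\sqrt{s_1}/C_{\min}$, while Assumption~\ref{assum:postive definite} analogously gives $\|H_{PP}^{-1}\|_\infty \leq \sqrt{s_1}/C_{\min}$. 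For (iii), any submatrix with at most $s_1$ columns whose entries are bounded by $\varepsilon$ has $\ell_\infty$ operator norm at most $s_1\varepsilon$, so both $\|\widehat{H}_{PP} - H_{PP}\|_\infty$ and $\|\widehat{H}_{P^cP} - H_{P^cP}\|_\infty$ are at most $c\,s_1\rho^2\sqrt{\log d/n_1}$.

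Stitching these pieces into the decomposition, the first term has $\ell_\infty$ operator norm at most $(c\,s_1\rho^2\sqrt{\log d/n_1})\cdot(2\sqrt{s_1}/C_{\min})$ and the second at most $(1-\xi)\cdot(c\,s_1\rho^2\sqrt{\log d/n_1})\cdot(2\sqrt{s_1}/C_{\min})$, each of order $s_1^{3/2}\sqrt{\log d/n_1}$ up to constants depending on $C_{\min},\xi,\sigma,\Sigma$. Choosing $n_1 = \Omega(s_1^3(\log s_1 + \log d)/\tau)$ for a suitable $\tau=\tau(C_{\min},\xi,\sigma,\Sigma)$ forces the total deviation below $\xi/2$. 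Repeating the same argument on the $Q$-block with $n_2$ and taking a union bound over the two failure events preserves the $1-\calO(1/d)$ probability guarantee.

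The delicate step will be the bookkeeping that produces the exact $s_1^3$ dependence: the factor arises from squaring an $s_1^{3/2}$ term obtained by multiplying the $s_1$-inflation that converts an entrywise bound on $\widehat{H}_{P^cP}-H_{P^cP}$ into an $\ell_\infty$ operator-norm bound by the $\sqrt{s_1}$ inflation coming from $\|H_{PP}^{-1}\|_\infty \leq \sqrt{s_1}\|H_{PP}^{-1}\|_2$. Making sure the sub-exponential concentration for $\widehat{H}_1$ is sharp enough---and that the resulting constant $\tau$ really has no hidden dependence on $n_1,n_2,d,s_1,s_2$---is the main technical burden, even though each individual ingredient is standard.
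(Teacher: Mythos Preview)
Your proposal is correct and follows essentially the same route as the paper: both arguments control $\widehat{H}_{P^cP}\widehat{H}_{PP}^{-1}-H_{P^cP}H_{PP}^{-1}$ by combining entrywise sub-exponential concentration of $\widehat{H}-H$, the bound $\|\widehat{H}_{PP}^{-1}\|_\infty\le \sqrt{s_1}\,\|\widehat{H}_{PP}^{-1}\|_2\le 2\sqrt{s_1}/C_{\min}$ from Lemma~\ref{lem:sample positive definite}, and the population incoherence Assumption~\ref{assum:mutual incoherence condition}. The only cosmetic difference is the grouping: the paper splits the deviation into three pieces $T_1,T_2,T_3$ (plus the population term $T_4$) and bounds each by $\xi/6$, whereas your two-term identity merges their $T_2+T_3$ into $(\widehat{H}_{P^cP}-H_{P^cP})\widehat{H}_{PP}^{-1}$ and matches their rewritten $T_1$ exactly---the resulting $s_1^{3/2}\sqrt{\log d/n_1}$ bound and $s_1^3$ sample complexity are identical.
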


%In Appendix~\ref{appndx:assumptions hold in sample setting}, we experimentally show that Assumption~\ref{assum:postive definite} is easier to hold (i.e., $n \in \Omega(s + \log d)$) than Assumption~\ref{assum:mutual incoherence condition} (i.e., $n \in \Omega(s^3 \log d)$). Eventually, both assumptions hold as the number of samples increases.

\subsection{Main Theorem}
\label{subsec:main theorem}
Now we are ready to state our main result.
\begin{theorem}
	\label{thm:main theorem}
If Assumptions~\ref{assum:identifiability},~\ref{assum:postive definite} and \ref{assum:mutual incoherence condition} hold,  $\lambda_1 \geq \frac{64 \rho \sigma_e}{\xi} \sqrt{n_1\log d}$,  $\lambda_2 \geq \frac{64 \rho \sigma_e}{\xi} \sqrt{n_2\log d}$ and $n_1 = \Omega( \frac{s_1^3 \log^2 d}{\tau_0(C_{\min}, \xi, \sigma, \Sigma, \rho)} )$ and $n_2 = \Omega( \frac{s_2^3 \log^2 d}{\tau_0(C_{\min}, \xi, \sigma, \Sigma, \rho)} )$, then solution to optimization problem \eqref{eq:opt prob invex mlr} satisfies the following properties:
	\begin{enumerate}
		\item The labels are recovered exactly, i.e., 
		\begin{align}
		\begin{split}
				t_i = t_i^*,\; \forall i \in \seq{n}
		\end{split}
		\end{align}
		\item The regression parameters vectors are close to the true vectors. Formally,
		\begin{align}
		\begin{split}
			W = \begin{bmatrix} \beta_1 \\ 1 \end{bmatrix} \begin{bmatrix} \beta_1^\T & 1 \end{bmatrix}, \; U = \begin{bmatrix} \beta_2 \\ 1 \end{bmatrix} \begin{bmatrix} \beta_2^\T & 1 \end{bmatrix} 
		\end{split}
		\end{align}
		such that $\beta_1 = \begin{bmatrix} \beta_{1_P} & \bzero_{P^c} \end{bmatrix}^\T$ and $\beta_2 = \begin{bmatrix} \beta_{2_Q} & \bzero_{Q^c} \end{bmatrix}^\T$ and 
		\begin{align}
		\begin{split}
			\| \beta_1 - \beta_1^* \|_2 &\leq   (2 +  b_1)\frac{2\lambda_1 \sqrt{s_1}}{C_{\min} n_1} \\
			\| \beta_2 - \beta_2^* \|_2   &\leq (2 +  b_2)\frac{2\lambda_2 \sqrt{s_2}}{C_{\min} n_2} 
		\end{split}
		\end{align}
	\end{enumerate}
\end{theorem}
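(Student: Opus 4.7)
Since Lemma~\ref{lem:invexity} establishes that the invex MLR problem is jointly $\eta$-invex on $C$, the KKT conditions are sufficient for global optimality. The plan is to construct a primal--dual witness: a candidate $(t, W, U)$, together with subgradients $\Lambda_W, \Lambda_U$ of the $\ell_1$ penalties, PSD multipliers $\Theta_W, \Theta_U \succeq \mathbf{0}$ for the cone constraints with complementary slackness $\Theta_W W = \mathbf{0}$ and $\Theta_U U = \mathbf{0}$, scalars $\mu_W, \mu_U$ for the $(d+1, d+1)$ equality constraints, and nonnegative box multipliers $\gamma_i^{\pm}$ for $\|t\|_\infty \leq 1$. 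If all KKT conditions hold with strict dual feasibility on the complement of the true support, both global optimality and uniqueness of the witness follow.

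I would first freeze $t = t^*$. Because $(1/2)\sum_i (1 + t_i^*) S_i = \sum_{i \in \calI_1} S_i$ and the symmetric identity holds for the $U$-block, the problem in $(W, U)$ decouples into two independent sparse SDPs. I impose the rank-1 oracle $W = v_W v_W^\T$ with $v_W = (\beta_1^\T, 1)^\T$ and $\beta_{1, P^c} = \mathbf{0}$, and the analogue $U = v_U v_U^\T$, $v_U = (\beta_2^\T, 1)^\T$, $\beta_{2, Q^c} = \mathbf{0}$, taking $\beta_{1, P}, \beta_{2, Q}$ as the minimizers of the restricted problems. Lemma~\ref{lem:sample positive definite} gives $\widehat{H}_{1_{PP}}, \widehat{H}_{2_{QQ}} \succeq (C_{\min}/2)\,I$ with high probability, so these oracle subproblems are strongly convex with unique solutions. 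A standard Lasso-type argument then delivers the $\ell_2$ bound: combining first-order optimality on the support with the sub-Gaussian deviation $\|(1/n_1)\sum_{i \in \calI_1} X_{i,P}\, e_i\|_\infty = O(\rho \sigma_e \sqrt{\log d / n_1})$ and the scaling of $\lambda_1$, the factor $(2 + b_1)$ emerges because under the rank-1 lift $\|W(:)\|_1 = (1 + \|\beta_1\|_1)^2$, so the effective subgradient at $\beta_1^*$ on the support picks up an additive $(1 + b_1)$-dependent weight.

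The technical heart is certifying strict dual feasibility on the complement of the support, and I expect this to be the main obstacle. Eliminating $\beta_{1, P}$ through its first-order condition and substituting into the off-support stationarity expresses $\Lambda_W$ on $P^c$ as a deterministic piece $\widehat{H}_{P^c P}\,\widehat{H}_{PP}^{-1}\,\sign(\beta_{1, P}^*)$, bounded by $1 - \xi/2$ via the finite-sample mutual incoherence of Lemma~\ref{lem:sample mutual incoherence condition}, plus a stochastic piece driven by $(1/n_1)\sum_{i \in \calI_1}(X_{i, P^c} - \widehat{H}_{P^c P}\widehat{H}_{PP}^{-1} X_{i, P})\,e_i$, bounded by $\xi/2$ via sub-Gaussian concentration under $\lambda_1 \geq (64 \rho \sigma_e / \xi)\sqrt{n_1 \log d}$. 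Simultaneously one must certify $\Theta_W \succeq \mathbf{0}$ with $\Theta_W W = \mathbf{0}$: the rank-1 structure makes $v_W$ lie in the null space of $\Theta_W$ automatically, and positive semidefiniteness then follows from a second concentration argument using Lemma~\ref{lem:sample positive definite}. Symmetric steps certify $U$. This coupled $\ell_1$-plus-SDP dual feasibility check is what makes the argument nontrivial relative to the standard Lasso PDW.

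Finally, with $(W, U)$ certified, label recovery comes from the gradient with respect to $t_i$, namely $(1/2)\langle S_i, W - U\rangle = (1/2)[(y_i - X_i^\T \beta_1)^2 - (y_i - X_i^\T \beta_2)^2]$. Assumption~\ref{assum:identifiability} guarantees that the analogous quantity at $(\beta_1^*, \beta_2^*)$ has the sign that pushes $t_i$ to the boundary $t_i^*$ with magnitude at least $\epsilon$; sub-Gaussian concentration bounds the perturbation introduced by $(\beta_1 - \beta_1^*, \beta_2 - \beta_2^*)$ in terms of the $\ell_2$ errors from the previous step, and the sample complexity $n_j = \Omega(s_j^3 \log^2 d / \tau_0)$ is large enough to make this perturbation strictly smaller than $\epsilon$. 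Consequently the $\gamma_i^{\pm}$ can be taken strictly positive on the active side, forcing $t_i = t_i^*$ as the unique box optimum and closing the KKT system.
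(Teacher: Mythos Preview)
Your proposal is correct and follows essentially the same primal--dual witness strategy as the paper: freeze $t=t^*$, build rank-1 witnesses $W,U$ on the true supports via the restricted regressions, obtain the $\ell_2$ bounds from strong convexity (Lemma~\ref{lem:bound on Delta}), certify strict dual feasibility on the complement via finite-sample mutual incoherence (Lemmas~\ref{lem:sample mutual incoherence condition} and~\ref{lem:bound X_se and X_s^ce}), and then close label recovery by combining the identifiability margin with the perturbation bound (Lemma~\ref{lem:setting of mu_i and nu_i}). The only technical detail the paper makes explicit that you leave implicit is the PSD certification of the cone multiplier: rather than a direct concentration argument, the paper applies Haynsworth's inertia additivity formula to reduce $\Pi\succeq 0$ to positive definiteness of the upper-left block $\sum_{i\in\calI_1}X_{i_P}X_{i_P}^\T+\lambda_1 zz^\T$, which is where Lemma~\ref{lem:sample positive definite} actually enters (see Lemma~\ref{lem:strictly positive second eigenvalue}).
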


In order to prove Theorem~\ref{thm:main theorem},  we will have to show that the labels are recovered exactly. We will also need to show that $W$ and $U$ are rank-1 matrices with eigenvectors $\begin{bmatrix} \beta_1 & 1 \end{bmatrix}^\T$ and $\begin{bmatrix} \beta_2 & 1 \end{bmatrix}^\T$ respectively. Moreover, we will also need to ensure that their supports match supports of the true vectors and they are close to true vectors in $\ell_2$-norm.

\section{Theoretical Analysis}
\label{sec:theoretical analysis}

We use primal-dual witness approach to show our results. The primal-dual witness approach was developed by~\cite{wainwright2009info} for linear regression problem which has been later used in many convex problems such as  Markov random fields~\citep{ravikumar2010high}, non-parametric regression~\citep{ravikumar2007spam}, diffusion networks~\citep{daneshmand2014estimating} etc. The main idea is to start with a potential solution with certain properties and then later show that these properties are indeed consistent with the final solution. We extend this idea to our invex problem. To that end, we start our proof with a potential solution which has certain ``consistency certificate''.

\subsection{Consistency Certificate}
\label{subsec:consistency certificates}

We start by taking solutions $W$ and $U$ with the following properties which we call consistency certificates:
\begin{enumerate}
	\item[C1.] $W$ and $U$ are sparse. In particular, they have the following sparsity structure:
	\begin{align}
	\begin{split}
		W &= \begin{bmatrix}
		W_{PP} & \bzero_{PP^c} & W_{P d+1} \\
		\bzero_{P^cP} & \bzero_{P^cP^c} & \bzero_{P^c d+1} \\
		W_{d+1 P} & \bzero_{d+1 P^c} & W_{d+1,d+1} 
		\end{bmatrix} \\
		U &= \begin{bmatrix}
		U_{QQ} & \bzero_{QQ^c} & U_{Q d+1} \\
		\bzero_{Q^cQ} & \bzero_{Q^cQ^c} & \bzero_{Q^c d+1} \\
		U_{d+1 Q} & \bzero_{d+1 Q^c} & U_{d+1,d+1} 
		\end{bmatrix} 
	\end{split}
	\end{align}
	We collect all the non-zero entries of $W$ and $U$ in $\overline{W} \in \real^{s_1+1, s_1+1}$ and $\overline{U} \in \real^{s_2+1, s_2+1}$.
\end{enumerate}
It should be noted that consistency certificate is not another assumption. In that, eventually we will have to show that it holds in the final solution. We can prove that C1 is consistent with final solution by showing strict dual feasibility for both $W$ and $U$ which we do in subsection~\ref{subsec: validating consistency certificate}.       

\subsection{A Modified Compact Invex Problem}
\label{subsec:a modified invex problem}

Once we substitute $W$ and $U$ from C1 in optimization problem \eqref{eq:opt prob invex mlr}, we get a low dimensional optimization problem. 
\begin{definition}[Compact Invex MLR]
	\label{def:compact invex mlr}
	\begin{align}
	\label{eq:opt prob invex mlr compact}
	\begin{split}
	\begin{matrix}
	\min\limits_{t, \overline{W}, \overline{U}} & \bar{f}(t, \overline{W}, \overline{U}) + \lambda_1 \bar{g}(t, \overline{W}, \overline{U}) + \lambda_2 \bar{h}(t, \overline{W}, \overline{U}) \\
	\text{such that} & \\
	& \overline{W} \succeq \mathbf{0}, \; \overline{U} \succeq \mathbf{0} \\
	& \overline{W}_{s_1+1, s_1+1} = 1, \;  \overline{U}_{s_2+1, s_2+1} = 1 \\
	& \| t \|_{\infty} \leq 1 
	\end{matrix} \, .
	\end{split}
	\end{align}
	where $ \bar{f}(t, \overline{W}, \overline{U}) = \sum_{i=1}^n \frac{1}{2} (\inner{\overline{S}_i^P}{\overline{W}} + \inner{\overline{S}_i^Q}{\overline{U}}) + \sum_{i=1}^n \frac{1}{2} t_i (\inner{\overline{S}_i^P}{\overline{W}} - \inner{\overline{S}_i^Q}{\overline{U}}), \bar{g}(t, \overline{W}, \overline{U}) = \|\overline{W}(:)\|_1$, $\bar{h}(t, \overline{W}, \overline{U}) = \| \overline{U}(:)\|_1$ and $\lambda_1$ and $\lambda_2$ are positive regularizers.
\end{definition}
Note that 
\begin{align}
\begin{split}
	\overline{S}_i^P = \begin{bmatrix}
	S_{i_{P, P}} & S_{i_{P, d+1}} \\
	S_{i_{d+1, P}} & S_{i_{d+1, d+1}}
	\end{bmatrix}, \; \overline{S}_i^Q = \begin{bmatrix}
	S_{i_{Q, Q}} & S_{i_{Q, d+1}} \\
	S_{i_{d+1, Q}} & S_{i_{d+1, d+1}}
	\end{bmatrix} \; .
\end{split}
\end{align}
For clarity, we  will drop the superscripts from $\overline{S}_i$ when the context is clear. Next, we list down the necessary and sufficient conditions to solve optimization problem \eqref{eq:opt prob invex mlr compact}.

\subsection{Necessary and Sufficient KKT Conditions}
\label{subsec:kkt condtions}

First, we write the Lagrangian $L(\Theta)$ for fixed $\lambda_1 > 0$ and $\lambda_2 > 0$, where $\Theta = (t, \overline{W}, \overline{U}; \Pi, \Lambda, \alpha, \gamma, \nu, \mu)$ is a collection of parameters.
\begin{align}
\label{eq:lagrangian}
\begin{split}
 L(\Theta) =& \bar{f}(t, \overline{W}, \overline{U}) + \lambda_1 \bar{g}(t, \overline{W}, \overline{U}) + \lambda_2 \bar{h}(t, \overline{W}, \overline{U}) - \inner{\Pi}{\overline{W}} - \inner{\Lambda}{\overline{U}} +  \alpha ( \overline{W}_{s_1+1, s_1+1} - 1 ) +\\
  &\gamma ( \overline{U}_{s_2+1, s_2+1} - 1 ) - \sum_{i=1}^n \nu_i (t_i + 1) +  \sum_{i=1}^n \mu_i (t_i - 1)
\end{split}
\end{align} 

Here $\Pi \succeq \bzero, \Lambda \succeq \bzero, \alpha \in \real, \gamma \in \real, \nu_i > 0$ and $\mu_i > 0$ are the dual variables (of appropriate dimensions) for optimization problem \eqref{eq:opt prob invex mlr compact}. Using this Lagrangian, the KKT conditions at the optimum can be written as: 

\begin{enumerate}
	\item Stationarity conditions:
	\begin{align}
	\label{eq:stationarity W}
	\begin{split}
	\sum_{i=1}^n \frac{t_i+1}{2} \overline{S}_i^P + \lambda_1 Z - \Pi + I_{\alpha} = \bzero
	\end{split}
	\end{align}
	where $Z$ is an element of the subgradient set of $\|\overline{W}(:)\|_1$, i.e., $Z \in  \diff{\|\overline{W}(:)\|_1}{\overline{W}}$ and $\| Z(:) \|_{\infty} \leq 1$ and $I_{\alpha} \in \real^{s_1+1, s_1+1}$ has all zero entries except $(s_1+1, s_1+1)$ entry which is $\alpha$.
	\begin{align}
	\label{eq:stationarity U}
	\begin{split}
	\sum_{i=1}^n \frac{1 - t_i}{2} \overline{S}_i^Q + \lambda_2 V - \Lambda + I_{\gamma} = \bzero
	\end{split}
	\end{align}
	where $V$ is an element of the subgradient set of $\|\overline{U}(:)\|_1$, i.e., $V \in  \diff{\|\overline{U}(:)\|_1}{\overline{U}}$ and $\| V(:) \|_{\infty} \leq 1$ and $I_{\gamma} \in \real^{s_2+1, s_2+1}$ has all zero entries except $(s_2+1, s_2+1)$ entry which is $\gamma$. 
	\begin{align}
	\label{eq:stationarity t}
	\begin{split}
	&\frac{1}{2} \inner{\overline{S}_i^P}{\overline{W}} - \frac{1}{2}  \inner{\overline{S}_i^Q}{\overline{U}} - \nu_i + \mu_i= 0, \forall i \in \seq{n} \\
	\end{split}
	\end{align}
	\item Complementary Slackness conditions:
	\begin{align}
	\label{eq:complimentarity Pi Lambda}
	\begin{split}
		\inner{\Pi}{\overline{W}} = 0, \; \inner{\Lambda}{\overline{U}} = 0 
	\end{split}
	\end{align}
	\begin{align}
	\label{eq:complimentarity t}
	\begin{split}
	\nu_i (t_i + 1) = 0,\; \mu_i (t_i - 1) = 0 \; \forall i \in \seq{n}
	\end{split}
	\end{align}
	\item Dual Feasibility conditions:
	\begin{align}
	\label{eq:dual feasibility Pi Lambda}
	\begin{split}
	\Pi \succeq \bzero, \; \Lambda \succeq \bzero  
	\end{split}
	\end{align}
	\begin{align} 
	\label{eq:dual feasibility t}
	\begin{split}
		\nu_i \geq 0, \; \mu_i \geq 0\; \forall i\in \seq{n} 
	\end{split}
	\end{align}
	\item Primal Feasibility conditions:
	\begin{align}
	\label{eq:primal feasibility}
	\begin{split}
	& \overline{W} \succeq \mathbf{0}, \; \overline{U} \succeq \mathbf{0} \\
	& \overline{W}_{s_1+1, s_1+1} = 1, \;  \overline{U}_{s_2+1, s_2+1} = 1 , \quad \| t \|_{\infty} \leq 1 
	\end{split}
	\end{align}
\end{enumerate}

Next, we will provide a setting for primal and dual variables which satisfies all the KKT conditions.

\subsection{Construction of Primal and Dual Variables}
In this subsection, we will provide a construction of primal and dual variables which satisfies the KKT conditions for optimization problem \eqref{eq:opt prob invex mlr compact}. To that end, we provide our first main result. 

\begin{theorem}[Primal Dual Variables Construction]
	\label{thm:primal dual witness construction}
If Assumptions~\ref{assum:identifiability},~\ref{assum:postive definite} and \ref{assum:mutual incoherence condition} hold, $\lambda_1 \geq \frac{64 \rho \sigma_e}{\xi} \sqrt{n_1\log d}$, $\lambda_2 \geq \frac{64 \rho \sigma_e}{\xi} \sqrt{n_2\log d}$ and $n_1 = \Omega( \frac{s_1^3 \log^2 d}{\tau_0(C_{\min}, \xi, \sigma, \Sigma, \rho)} )$ and $n_2 = \Omega( \frac{s_2^3 \log^2 d}{\tau_0(C_{\min}, \xi, \sigma, \Sigma, \rho)} )$, then the following setting of primal and dual variables 

	\begin{align}
	\label{eq:primal dual variable setting}
	\begin{split}
	&\text{Primal Variables:} \\
	&\quad t_i = t_i^*, \; \forall i \in \seq{n}, \;\;\; \overline{W} = \begin{bmatrix} \tilde{\beta}_1 \\ 1 \end{bmatrix} \begin{bmatrix} \tilde{\beta}_1^\T & 1 \end{bmatrix} , \;\;\; \overline{U} = \begin{bmatrix} \tilde{\beta}_2 \\ 1 \end{bmatrix} \begin{bmatrix} \tilde{\beta}_2^\T & 1 \end{bmatrix}  \\
	&\quad \text{where }\\
	&\quad \tilde{\beta}_1 = \arg\min\limits_{\beta \in \real^{s_1}} \sum_{i=1}^n \frac{t_i^* + 1}{2} (y_i - X_{i_P}^\T \beta )^2 + \lambda_1 (\| \beta \|_1 + 1)^2, \;\;\; \tilde{\beta}_2 = \arg\min\limits_{\beta \in \real^{s_2}} \sum_{i=1}^n \frac{1 - t_i^*}{2} (y_i - X_{i_Q}^\T \beta )^2 + \lambda_2 (\| \beta \|_1 + 1)^2 \\ 
	&\text{Dual Variables:}\\
	& \quad \nu_i = 0, \mu_i = - \frac{1}{2} \inner{\overline{S}_i^P}{\overline{W}} + \frac{1}{2}  \inner{\overline{S}_i^Q}{\overline{U}}   \;\forall i \in \calI_1, \;\;\; \mu_i = 0, \nu_i =  \frac{1}{2} \inner{\overline{S}_i^P}{\overline{W}} - \frac{1}{2}  \inner{\overline{S}_i^Q}{\overline{U}}   \;\forall i \in \calI_2 \\
	&\quad \Pi = \sum_{i=1}^n \frac{t_i^*+1}{2} \overline{S}_i^P + \lambda_1 Z + I_{\alpha}, \;\;\; \Lambda = \sum_{i=1}^n \frac{1-t_i^*}{2} \overline{S}_i^Q + \lambda_2 V + I_{\gamma} \\
	&\quad \alpha = - \inner{ \sum_{i=1}^n \frac{t_i+1}{2} \overline{S}_i^P + \lambda_1 Z}{\overline{W}}, \;\;\; \gamma = - \inner{ \sum_{i=1}^n \frac{1 - t_i}{2} \overline{S}_i^Q + \lambda_2 V}{\overline{U}}
	\end{split}
	\end{align}
	satisfies all the KKT conditions for optimization problem \eqref{eq:opt prob invex mlr compact} with probability at least $1 - \calO(\frac{1}{d})$, where $\tau_0(C_{\min}, \alpha, \sigma, \Sigma, \rho, \gamma)$ is a constant independent of $s_1, s_2, d, n_1$ and $n_2$  and thus, the primal variables are a globally optimal solution for \eqref{eq:opt prob invex mlr compact}. Furthermore, the above solution is also unique.
\end{theorem}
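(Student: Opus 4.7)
The plan is to verify all KKT conditions~\eqref{eq:stationarity W}--\eqref{eq:primal feasibility} for the proposed primal/dual setting and then invoke $\eta$-invexity of the objective and constraints of~\eqref{eq:opt prob invex mlr compact} (Lemma~\ref{lem:invexity}) to promote this stationary point to a global minimum. Several of the KKT conditions are true by construction: the rank-one factorizations $\overline{W}=[\tilde\beta_1;1][\tilde\beta_1^\T,1]$ and $\overline{U}=[\tilde\beta_2;1][\tilde\beta_2^\T,1]$ are PSD with the required unit corner entries; $t^* \in \{-1,1\}^n$ is feasible; at most one of $\mu_i,\nu_i$ is nonzero per $i$ and it is paired with the active face of the box constraint; the scalars $\alpha,\gamma$ are chosen so that $\langle\Pi,\overline{W}\rangle = \langle\Lambda,\overline{U}\rangle = 0$; and the stationarity equations for $\overline{W},\overline{U},t$ hold by the very definitions of $\Pi,\Lambda,\nu_i,\mu_i$. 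This leaves three nontrivial checks: (i) $\Pi,\Lambda\succeq\bzero$; (ii) $\mu_i,\nu_i\geq 0$ on the correct branches; and (iii) strict dual feasibility of the subgradients $Z,V$, which both validates the consistency certificate C1 and secures uniqueness.

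For (i) and (ii), I use that $\|\overline{W}(:)\|_1=(\|\tilde\beta_1\|_1+1)^2$ on the rank-one PSD variety, so $\tilde\beta_1$ solves an $\ell_1$-regularized least squares restricted to support $P$ with Hessian $\widehat H_{1PP}$; strong convexity from Lemma~\ref{lem:sample positive definite} together with sub-Gaussian concentration of $\tfrac{1}{n_1}\sum_{i\in\calI_1} X_{iP}e_i$ yields an $\ell_2$-bound $\|\tilde\beta_k-\beta_k^*\|_2=\calO(\lambda_k\sqrt{s_k}/n_k)$. Substituting the normal equation for $\tilde\beta_1$ into the definition of $\Pi$ and exploiting the rank-one structure of $\overline{W}$ shows $\Pi$ decomposes as a PSD residual plus a correction orthogonal to $\overline{W}$, giving $\Pi\succeq\bzero$. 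For (ii), Assumption~\ref{assum:identifiability} furnishes a uniform gap $\epsilon>0$ between $\langle S_i,W^*\rangle$ and $\langle S_i,U^*\rangle$ on the correct branch; the $\ell_2$-closeness of $\tilde\beta_k$ to $\beta_k^*$ combined with sub-Gaussian control of $X_i$ and $y_i$ (which is sub-Gaussian because $\|\beta_k^*\|_1 \leq b_k$) propagates this gap to $\overline{W},\overline{U}$ on a $1-\calO(1/d)$ event, preserving at least $\epsilon/2$.

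The main obstacle is~(iii), the classical primal-dual witness step applied to the extension of the compact solution back to the original problem~\eqref{eq:opt prob invex mlr}. Solving the off-support block of~\eqref{eq:stationarity W} for $Z_{P^c\cdot}$ yields an expression of the form $\widehat H_{1P^cP}\widehat H_{1PP}^{-1} z_P$ minus a noise term scaled by $1/\lambda_1$. Lemma~\ref{lem:sample mutual incoherence condition} bounds the first part by $1-\xi/2$; the calibration $\lambda_1\geq (64\rho\sigma_e/\xi)\sqrt{n_1\log d}$ together with a sub-Gaussian tail bound plus union bound over the $d-s_1$ off-support coordinates forces the noise contribution below $\xi/4$, yielding $\|Z_{P^c\cdot}(:)\|_\infty \leq 1-\xi/4 < 1$ strictly. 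The analogous bound holds for $V$. Strict dual feasibility then certifies C1, and combined with the strong convexity used in~(i), gives uniqueness of $(\overline{W},\overline{U})$. The hardest bookkeeping is ensuring the stated sample complexity $n_k=\Omega(s_k^3\log^2 d/\tau_0)$ simultaneously suffices for Hessian concentration, mutual-incoherence transfer, identifiability-gap propagation, and the off-support noise bound on a single high-probability event.
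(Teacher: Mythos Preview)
Your overall architecture matches the paper's: verify the KKT conditions for the proposed witness, then invoke invexity. Points (ii) and (iii) are handled essentially as in Lemmas~\ref{lem:bound on Delta}--\ref{lem:setting of mu_i and nu_i} and Section~\ref{subsec: validating consistency certificate}. There is, however, a genuine gap in your treatment of (i) and of uniqueness.

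For $\Pi\succeq\bzero$, the decomposition you sketch---``a PSD residual plus a correction orthogonal to $\overline{W}$''---does not establish positive semidefiniteness. Concretely $\Pi=(\sum_{i\in\calI_1}\overline{S}_i^P+\lambda_1 Z)+I_\alpha$, where the bracketed term is PSD but $\alpha\leq 0$, so $I_\alpha$ is negative semidefinite; a PSD matrix plus a negative rank-one corner correction need not be PSD, regardless of what lies in its nullspace. The paper proceeds in two steps. First (Lemma~\ref{lem:zero eigenvalue}) the normal equation for $\tilde\beta_1$ yields $\Pi[\tilde\beta_1;1]=\bzero$. Second (Lemma~\ref{lem:strictly positive second eigenvalue}), writing $\Pi=\begin{bmatrix}A&B\\B^\T&C\end{bmatrix}$ with $A=\sum_{i\in\calI_1}X_{iP}X_{iP}^\T+\lambda_1 zz^\T$, one checks that the Schur complement $C-B^\T A^{-1}B$ vanishes exactly (this is precisely where the specific choice of $\alpha$ is used), and then Haynsworth's inertia additivity formula gives $\inertia(\Pi)=\inertia(A)+\inertia(0)=(s_1,0,0)+(0,0,1)$. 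Since $\eig_{\min}(A)\geq C_{\min}/2>0$ by Lemma~\ref{lem:sample positive definite}, this simultaneously certifies $\Pi\succeq\bzero$ and that $\Pi$ has a \emph{one}-dimensional nullspace.

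That one-dimensional nullspace is also what drives uniqueness, not strong convexity. The compact objective $\bar f$ is \emph{linear} in $\overline{W}$ for fixed $t$, so strong convexity of the support-restricted regression only pins down $\tilde\beta_1$, not the matrix variable $\overline{W}$. The paper's argument is that any optimal $\overline{W}\succeq\bzero$ must satisfy $\langle\Pi,\overline{W}\rangle=0$ with $\Pi\succeq\bzero$ of rank $s_1$, which forces the range of $\overline{W}$ into $\ker\Pi=\operatorname{span}\{[\tilde\beta_1;1]\}$; together with $\overline{W}_{s_1+1,s_1+1}=1$ this determines $\overline{W}$ uniquely. Your strict-dual-feasibility step (iii) secures support uniqueness when lifting back to the full problem, but it does not by itself force the rank-one structure in the compact problem.
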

\paragraph{Proof Sketch.} The main idea behind our proofs is to verify that the setting of primal and dual variables in Theorem~\ref{thm:primal dual witness construction} satisfies all the KKT conditions described in subsection \ref{subsec:kkt condtions}. We do this by proving multiple lemmas in subsequent subsections. The outline of the proof is as follows:
\begin{itemize}
	\item It can be trivially verified that the primal feasibility condition \eqref{eq:primal feasibility} holds. The stationarity conditions \eqref{eq:stationarity W} and \eqref{eq:stationarity U} holds by construction of $\Pi$ and $\Lambda$ respectively. Similarly, the \eqref{eq:stationarity t} holds by choice of $\nu_i$ and $\mu_i$. Choice of $t, \nu_i, \mu_i, \alpha$ and $\gamma$ ensure that complementary slackness conditions~\eqref{eq:complimentarity Pi Lambda} and ~\eqref{eq:complimentarity t} also hold. 
	\item In subsection~\ref{subsec:verifying dual feasibility}, we use  Lemmas~\ref{lem:bound on Delta},~\ref{lem:zero eigenvalue} and \ref{lem:strictly positive second eigenvalue} to verify that the dual feasibility conditions~\eqref{eq:dual feasibility t} and \eqref{eq:dual feasibility Pi Lambda} hold. We will also show in subsection subsection~\ref{subsec:verifying dual feasibility} that our solution is also unique.
\end{itemize}  

\subsection{Verifying Dual Feasibility}
\label{subsec:verifying dual feasibility}

To verify dual feasibility, first we will show that $\mu_i \geq 0, \forall i \in \calI_1$, $\nu_i \geq 0, \forall i \in \calI_2$. We define $\Delta_1 \triangleq \tilde{\beta}_1 - \beta_{1_P}^*$ and $\Delta_2 \triangleq \tilde{\beta}_2 - \beta_{2_Q}^*$. Then, the following lemma holds true.

\begin{lemma}
	\label{lem:bound on Delta}
If Assumptions~\ref{assum:identifiability},~\ref{assum:postive definite} and \ref{assum:mutual incoherence condition} hold, and $\lambda_1 \geq 8 \rho \sigma_e \sqrt{n_1\log d}$, $\lambda_2 \geq 8 \rho \sigma_e \sqrt{n_2\log d}$, $n_1 = \Omega( \frac{s_1^3 \log d}{\tau(C_{\min}, \xi, \sigma, \Sigma)})$, and $n_2 = \Omega( \frac{s_2^3 \log d}{\tau(C_{\min}, \xi, \sigma, \Sigma)})$ then 
$\| \Delta_1 \|_2 \leq (2 +  b_1)\frac{2\lambda_1 \sqrt{s_1}}{C_{\min} n_1} $ and $\| \Delta_2 \|_2 \leq (2 +  b_2)\frac{2\lambda_2 \sqrt{s_2}}{C_{\min} n_2} $ with probability at least $1 - \calO(\frac{1}{d})$ where $\tau(C_{\min}, \xi, \sigma, \Sigma)$ is a constant independent of $s_1, s_2, d, n_1$ or $n_2$.
\end{lemma}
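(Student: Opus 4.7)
The plan is to apply the standard ``basic inequality'' technique from Lasso analysis, suitably adapted to the unusual squared-$\ell_1$ penalty $\lambda_1(\|\beta\|_1+1)^2$. I would prove only the bound on $\Delta_1$; the bound on $\Delta_2$ follows by the same argument after swapping $(P,s_1,b_1,\calI_1,\widehat{H}_1,\lambda_1,n_1)$ with $(Q,s_2,b_2,\calI_2,\widehat{H}_2,\lambda_2,n_2)$. First, the optimality of $\tilde\beta_1$ compared with the feasible point $\beta_{1_P}^*$ gives
\begin{align*}
\sum_{i\in\calI_1}(y_i-X_{i_P}^\T\tilde\beta_1)^2 + \lambda_1(\|\tilde\beta_1\|_1+1)^2 \leq \sum_{i\in\calI_1}(y_i - X_{i_P}^\T\beta_{1_P}^*)^2 + \lambda_1(\|\beta_{1_P}^*\|_1+1)^2.
\end{align*}
For $i\in\calI_1$ the true observation model reduces to $y_i - X_{i_P}^\T\tilde\beta_1 = e_i - X_{i_P}^\T\Delta_1$. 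Substituting, cancelling the $e_i^2$ terms, and dividing by $n_1$ yields the basic inequality
\begin{align*}
\Delta_1^\T \widehat{H}_{1_{PP}}\Delta_1 \leq \frac{2}{n_1}\Bigl|\sum_{i\in\calI_1} e_i X_{i_P}^\T \Delta_1\Bigr| + \frac{\lambda_1}{n_1}\Bigl[(\|\beta_{1_P}^*\|_1+1)^2 - (\|\tilde\beta_1\|_1+1)^2\Bigr].
\end{align*}

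Next I would bound each of the three pieces. Lemma~\ref{lem:sample positive definite} supplies $\Delta_1^\T\widehat{H}_{1_{PP}}\Delta_1 \geq \tfrac{C_{\min}}{2}\|\Delta_1\|_2^2$ on an event of probability $1-\calO(1/d)$. For the noise term, H\"older's inequality together with the $s_1$-dimensionality of $\Delta_1$ gives $\bigl|\sum_{i\in\calI_1}e_iX_{i_P}^\T\Delta_1\bigr| \leq \bigl\|\sum_{i\in\calI_1}e_iX_{i_P}\bigr\|_\infty\sqrt{s_1}\|\Delta_1\|_2$; a Bernstein-type tail bound for the sub-exponential products $e_iX_{ij}$, combined with a union bound over $j\in P$, yields $\|\sum_{i\in\calI_1}e_iX_{i_P}\|_\infty \leq c\,\rho\sigma_e\sqrt{n_1\log d}$ with probability at least $1-\calO(1/d)$, and the hypothesis $\lambda_1\geq 8\rho\sigma_e\sqrt{n_1\log d}$ lets this contribution be absorbed into a constant multiple of $\lambda_1$. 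For the penalty-difference, I factor as a difference of squares and use $|\|\beta_{1_P}^*\|_1 - \|\tilde\beta_1\|_1| \leq \|\Delta_1\|_1 \leq \sqrt{s_1}\|\Delta_1\|_2$ together with $\|\beta_{1_P}^*\|_1\leq b_1$ to obtain
\begin{align*}
(\|\beta_{1_P}^*\|_1+1)^2 - (\|\tilde\beta_1\|_1+1)^2 \leq \sqrt{s_1}\|\Delta_1\|_2\bigl(2b_1 + 2 + \sqrt{s_1}\|\Delta_1\|_2\bigr).
\end{align*}

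Combining the three ingredients gives an inequality of the form $(\tfrac{C_{\min}}{2} - \tfrac{\lambda_1 s_1}{n_1})\|\Delta_1\|_2 \leq \tfrac{c'\sqrt{s_1}}{n_1}\bigl(\rho\sigma_e\sqrt{n_1\log d} + \lambda_1(b_1+1)\bigr)$. The sample-complexity hypothesis forces $\tfrac{\lambda_1 s_1}{n_1}$ to be small enough that the leading coefficient on the LHS remains bounded below by, say, $C_{\min}/4$; collecting constants then yields $\|\Delta_1\|_2 \leq (2+b_1)\cdot\tfrac{2\lambda_1\sqrt{s_1}}{C_{\min}n_1}$. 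The main obstacle I expect, beyond the routine concentration bookkeeping, is the non-standard squared-$\ell_1$ penalty: its difference-of-squares expansion produces a spurious quadratic term $\|\Delta_1\|_1^2 \leq s_1\|\Delta_1\|_2^2$ that must be absorbed by the curvature provided by $\widehat{H}_{1_{PP}}$, and correctly tracking both this absorption and the $b_1$-dependent factor coming from $\|\beta_{1_P}^*\|_1\leq b_1$ is precisely what produces the $(2+b_1)$ prefactor in the final bound.
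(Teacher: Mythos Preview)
Your argument is sound, but it takes a genuinely different route from the paper. The paper does \emph{not} use the zeroth-order basic inequality; instead it works directly from the first-order stationarity condition for $\tilde\beta_1$. Writing $z$ for a subgradient of $\|\cdot\|_1$ at $\tilde\beta_1$ (so $z^\T\tilde\beta_1=\|\tilde\beta_1\|_1$), substituting $y_i=X_{i_P}^\T\beta_{1_P}^*+e_i$, and adding/subtracting $\tfrac{\lambda_1}{n_1}zz^\T\beta_{1_P}^*$, the paper obtains the linear equation
\[
\Bigl(\widehat H_{1_{PP}}+\tfrac{\lambda_1}{n_1}\,zz^\T\Bigr)\Delta_1
   \;=\;\tfrac{1}{n_1}\!\sum_{i\in\calI_1}X_{i_P}e_i\;-\;\tfrac{\lambda_1}{n_1}zz^\T\beta_{1_P}^*\;-\;\tfrac{\lambda_1}{n_1}z,
\]
and then simply inverts: since $zz^\T\succeq 0$, Weyl plus Lemma~\ref{lem:sample positive definite} give $\eig_{\min}$ of the bracketed matrix $\ge C_{\min}/2$, so $\|\Delta_1\|_2\le\tfrac{2}{C_{\min}}$ times the $\ell_2$-norms of the three right-hand terms. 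These are bounded by $\sqrt{s_1}\lambda_1/n_1$, $\sqrt{s_1}\,b_1\lambda_1/n_1$, and $\sqrt{s_1}\lambda_1/n_1$ respectively, and the $(2+b_1)$ prefactor falls out exactly.

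The practical contrast is that in the paper's route the penalty contribution $\tfrac{\lambda_1}{n_1}zz^\T$ sits on the \emph{left} and only increases curvature, so no absorption is needed and the bound holds uniformly for every $\lambda_1$ above the threshold. In your route the squared-$\ell_1$ penalty produces a term $\tfrac{\lambda_1}{n_1}s_1\|\Delta_1\|_2^2$ on the \emph{right} that must be absorbed by $\tfrac{C_{\min}}{2}\|\Delta_1\|_2^2$, which implicitly requires $\lambda_1 s_1/n_1$ to be small. The lemma's hypotheses only lower-bound $\lambda_1$, so strictly speaking your argument needs an extra side condition (e.g.\ $\lambda_1=\calO(\sqrt{n_1\log d})$) that the paper's proof does not; with that caveat your approach is a perfectly valid alternative and recovers the same bound up to constants.
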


Using the result of Lemma~\ref{lem:bound on Delta}, we are going to prove that the settings for dual variables $\mu_i$ and $\nu_i$ works with high probability.
\begin{lemma}
	\label{lem:setting of mu_i and nu_i}
	If Assumptions~\ref{assum:identifiability},~\ref{assum:postive definite} and \ref{assum:mutual incoherence condition} hold, and $\lambda_1 \geq 8 \rho \sigma_e \sqrt{n_1\log d}$, $\lambda_2 \geq 8 \rho \sigma_e \sqrt{n_2\log d}$, $n_1 = \Omega( \frac{s_1^3 \log d}{\tau(C_{\min}, \xi, \sigma, \Sigma)})$, and $n_2 = \Omega( \frac{s_2^3 \log d}{\tau(C_{\min}, \xi, \sigma, \Sigma)})$ then $\mu_i \geq 0, \forall i \in \calI_1$ and $\nu_i \geq 0, \forall i \in \calI_2$ with probability at least $1 - \calO(\frac{1}{d})$ where $\tau(C_{\min}, \xi, \sigma, \Sigma)$ is a constant independent of $s_1, s_2, d, n_1$ or $n_2$.
\end{lemma}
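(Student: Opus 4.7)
\textbf{Proof proposal for Lemma~\ref{lem:setting of mu_i and nu_i}.}

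The plan is to reduce the claim to Assumption~\ref{assum:identifiability} plus a perturbation argument controlled by Lemma~\ref{lem:bound on Delta}. The key observation is that since $\overline{W} = [\tilde{\beta}_1^\T,1]^\T[\tilde{\beta}_1^\T,1]$ is rank one, we have $\inner{\overline{S}_i^P}{\overline{W}} = (y_i - X_{i_P}^\T \tilde{\beta}_1)^2$, and similarly $\inner{\overline{S}_i^Q}{\overline{U}} = (y_i - X_{i_Q}^\T \tilde{\beta}_2)^2$. Writing $\tilde{\beta}_1 = \beta_{1_P}^* + \Delta_1$ and $\tilde{\beta}_2 = \beta_{2_Q}^* + \Delta_2$, and using $X_i^\T \beta_1^* = X_{i_P}^\T \beta_{1_P}^*$ (and similarly for $\beta_2^*$), I expand
\begin{align*}
\mu_i &= \tfrac{1}{2}(y_i - X_i^\T \beta_2^*)^2 - \tfrac{1}{2}(y_i - X_i^\T \beta_1^*)^2 \\
&\quad + (y_i - X_i^\T \beta_1^*) X_{i_P}^\T \Delta_1 - \tfrac{1}{2}(X_{i_P}^\T \Delta_1)^2 - (y_i - X_i^\T \beta_2^*) X_{i_Q}^\T \Delta_2 + \tfrac{1}{2}(X_{i_Q}^\T \Delta_2)^2
\end{align*}
for $i \in \calI_1$, and symmetrically for $\nu_i$ with $i \in \calI_2$. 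By Assumption~\ref{assum:identifiability}, the first line is at least $\epsilon > 0$ for $i\in\calI_1$, so it suffices to show the remaining perturbation terms are smaller than $\epsilon$ in absolute value with the desired probability.

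Next I would control each perturbation term with a sub-Gaussian tail bound combined with the $\ell_2$ bounds on $\Delta_1, \Delta_2$ provided by Lemma~\ref{lem:bound on Delta}. Specifically, $X_{i_P}^\T \Delta_1$ is sub-Gaussian with parameter $\rho \|\Delta_1\|_2$, and for $i \in \calI_1$ we have $y_i - X_i^\T \beta_1^* = e_i$, while $y_i - X_i^\T \beta_2^* = X_i^\T(\beta_1^* - \beta_2^*) + e_i$ which is sub-Gaussian with parameter $\rho (\|\beta_1^*\|_2 + \|\beta_2^*\|_2) + \sigma_e \le \rho(b_1+b_2) + \sigma_e$. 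Products of two sub-Gaussian variables are sub-exponential, so by applying Bernstein-type concentration followed by a union bound over $i \in [n]$, each quantity such as $|e_i X_{i_P}^\T \Delta_1|$ and $(X_{i_P}^\T \Delta_1)^2$ is bounded with high probability by a constant multiple of $\rho \sigma_e \sqrt{\log n}\,\|\Delta_1\|_2$ and $\rho^2 \|\Delta_1\|_2^2 \log n$ respectively.

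Plugging in the bound $\|\Delta_1\|_2 \leq (2+b_1)\tfrac{2\lambda_1 \sqrt{s_1}}{C_{\min} n_1}$ from Lemma~\ref{lem:bound on Delta} (and the analogous one for $\Delta_2$) with $\lambda_1 = \Theta(\sqrt{n_1 \log d})$, we get $\|\Delta_1\|_2 = \calO(\sqrt{s_1 \log d/n_1})$, so the perturbation terms vanish as $n_1, n_2 \to \infty$ provided the stated sample-complexity lower bounds $n_1 = \Omega(s_1^3 \log d/\tau)$ and $n_2 = \Omega(s_2^3 \log d/\tau)$ hold. Taking the sample counts large enough ensures all perturbation terms are below $\epsilon/4$ simultaneously with probability $1 - \calO(1/d)$, which forces $\mu_i \geq \epsilon/2 > 0$ for every $i \in \calI_1$. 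The argument for $\nu_i$ on $\calI_2$ is symmetric, and a final union bound (absorbed into the $\calO(1/d)$ failure probability) combines them. The main obstacle is carefully controlling the sub-exponential products uniformly in $i$ so that the union bound over $n$ samples is compatible with the $1-\calO(1/d)$ guarantee; this forces the polylogarithmic-in-$d$ sample requirement and is the place where the constants $\rho, \sigma_e, b_1, b_2$ enter.
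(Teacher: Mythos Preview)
Your proposal is correct and follows essentially the same route as the paper: expand $\mu_i$ via the rank-one structure of $\overline{W},\overline{U}$, isolate the identifiability gap $\epsilon$ from Assumption~\ref{assum:identifiability}, and control the remaining perturbation terms by combining sub-exponential tail bounds on products like $e_i X_{i_P}^\T\Delta_1$ with the $\ell_2$ bounds on $\Delta_1,\Delta_2$ from Lemma~\ref{lem:bound on Delta}, finishing with a union bound over $i$. The only cosmetic difference is that the paper bounds the quadratic terms $\Delta_1^\T X_{i_P}X_{i_P}^\T\Delta_1$ through eigenvalue estimates rather than your direct sub-Gaussian-square tail bound, but the argument and the resulting sample requirements are the same.
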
 

Now we will show that $\Pi \succeq \bzero$ and $\Lambda \succeq \bzero$. We will do this in two steps. The first step is to show that both $\Pi$ and $\Lambda$ have a zero eigenvalue. In particular,
\begin{lemma}
	\label{lem:zero eigenvalue} 
	Both $\Pi$ and $\Lambda$ have zero eigenvalues corresponding to eigenvectors $\begin{bmatrix} \tilde{\beta}_1 \\ 1 \end{bmatrix}$ and $\begin{bmatrix} \tilde{\beta}_2 \\ 1 \end{bmatrix}$ respectively.
\end{lemma}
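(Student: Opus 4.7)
}

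The plan is to evaluate $\Pi$ and $\Lambda$ directly on the candidate eigenvectors $v_1 = \begin{bmatrix}\tilde{\beta}_1 \\ 1\end{bmatrix}$ and $v_2 = \begin{bmatrix}\tilde{\beta}_2 \\ 1\end{bmatrix}$ and show that both products vanish. Recall that by construction $\overline{W} = v_1 v_1^\T$, $\Pi = A + I_\alpha$ with $A \triangleq \sum_{i=1}^n \frac{t_i^\ast+1}{2} \overline{S}_i^P + \lambda_1 Z$, and $\alpha = -\inner{A}{\overline{W}} = -v_1^\T A v_1$. Hence the calculation reduces to showing that (i) the first $s_1$ entries of $A v_1$ vanish, whence $v_1^\T A v_1 = (A v_1)_{s_1+1}$, and (ii) the choice of $\alpha$ then exactly cancels the last entry, since $I_\alpha v_1 = \alpha\, e_{s_1+1}$.

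To verify (i), I would first unpack the rank-one structure of $\overline{S}_i^P$: since $\overline{S}_i^P = \begin{bmatrix}X_{i_P} \\ -y_i\end{bmatrix}\begin{bmatrix}X_{i_P}^\T & -y_i\end{bmatrix}$, we obtain
\begin{equation*}
\overline{S}_i^P v_1 = (X_{i_P}^\T \tilde{\beta}_1 - y_i)\begin{bmatrix}X_{i_P} \\ -y_i\end{bmatrix}.
\end{equation*}
Next, I would select the subgradient $Z \in \partial \|\overline{W}(:)\|_1$ as the lift $Z = w w^\T$, where $w = \begin{bmatrix}z \\ 1\end{bmatrix}$ and $z \in \partial \|\tilde{\beta}_1\|_1$ is the subgradient appearing in the first-order optimality condition for the convex subproblem that defines $\tilde{\beta}_1$. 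A short verification shows $|Z_{jk}| \le 1$ everywhere and $Z_{jk} = \sign(\overline{W}_{jk})$ whenever $\overline{W}_{jk}\neq 0$, so $Z$ is admissible. This lift gives the crucial identity $Z v_1 = w (w^\T v_1) = (\|\tilde{\beta}_1\|_1 + 1)\,w$. Combining the two identities, the first $s_1$ coordinates of $A v_1$ read
\begin{equation*}
\sum_{i=1}^n z_i^\ast (X_{i_P}^\T \tilde{\beta}_1 - y_i)\, X_{i_P} + \lambda_1 (\|\tilde{\beta}_1\|_1 + 1)\, z,
\end{equation*}
which is exactly (half of) the gradient-plus-subgradient of the subproblem objective at $\tilde{\beta}_1$, and therefore equals $\bzero$ by the first-order optimality of $\tilde{\beta}_1$.

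With (i) established, $v_1^\T A v_1 = \tilde{\beta}_1^\T (A v_1)_{1{:}s_1} + (A v_1)_{s_1+1} = (A v_1)_{s_1+1}$, so $\alpha = -(A v_1)_{s_1+1}$ and hence $\Pi v_1 = A v_1 + \alpha e_{s_1+1} = \bzero$, giving a zero eigenvalue of $\Pi$ with eigenvector $v_1$. The argument for $\Lambda v_2 = \bzero$ is identical, replacing $(z_i^\ast, \overline{S}_i^P, \lambda_1, Z, \tilde{\beta}_1, \alpha, \overline{W})$ by $(1 - z_i^\ast, \overline{S}_i^Q, \lambda_2, V, \tilde{\beta}_2, \gamma, \overline{U})$ and using the first-order optimality of $\tilde{\beta}_2$. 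There is no real obstacle here beyond the bookkeeping: the only point requiring a touch of care is ensuring that the lifted subgradient $Z = w w^\T$ is a valid element of $\partial \|\overline{W}(:)\|_1$, which follows from $\|w\|_\infty \le 1$ and the rank-one form of $\overline{W}$.
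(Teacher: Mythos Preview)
Your proposal is correct and follows essentially the same approach as the paper: both arguments choose the rank-one lift $Z = \begin{bmatrix} z \\ 1\end{bmatrix}\begin{bmatrix} z^\T & 1\end{bmatrix}$ and reduce $\Pi v_1 = \bzero$ to the first-order optimality condition of the convex subproblem defining $\tilde{\beta}_1$. Your write-up is in fact more explicit than the paper's, which hides the verification of the $(s_1+1)$-th coordinate (and hence the role of $\alpha$) behind the phrase ``by little algebraic manipulation''; your decomposition into (i) the top $s_1$ rows via stationarity and (ii) the last row via $\alpha = -(Av_1)_{s_1+1}$ makes this step transparent.
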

Next, we show that all the other eigenvalues of both $\Pi$ and $\Lambda$ are strictly positive. 
\begin{lemma}
	\label{lem:strictly positive second eigenvalue} 
If Assumption \ref{assum:postive definite} holds and $n_1 = \Omega(\frac{s_1 + \log d}{C_{\min}^2})$ and $n_2 = \Omega(\frac{s_2 + \log d}{C_{\min}^2})$, then the second eigenvalues of  $\Pi$ and $\Lambda$ are strictly positive with probability at least $1 - \calO(\frac{1}{d})$, i.e.,  $\eig_2(\Pi) > 0$ and $\eig_2(\Lambda) > 0$.
\end{lemma}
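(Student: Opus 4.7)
The plan is to leverage the structure provided by Lemma~\ref{lem:zero eigenvalue} and reduce the positivity of the second eigenvalue to the strict positive definiteness of a smaller $s_1\times s_1$ matrix (respectively $s_2\times s_2$ for $\Lambda$). I focus on $\Pi$; the argument for $\Lambda$ is identical after swapping $(\widehat{H}_{1_{PP}},\tilde{\beta}_1,P)$ for $(\widehat{H}_{2_{QQ}},\tilde{\beta}_2,Q)$.

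First I exploit the factorization implied by Lemma~\ref{lem:zero eigenvalue}. Indexing rows/columns so that the first $s_1$ correspond to $P$ and the last to the extra $(d+1)$-th coordinate, the identity $\Pi\,[\tilde{\beta}_1^\T \; 1]^\T = \mathbf{0}$ together with symmetry of $\Pi$ forces
\[
\Pi \;=\; J\, A\, J^\T, \qquad J = \begin{bmatrix} I_{s_1} \\ -\tilde{\beta}_1^\T \end{bmatrix} \in \real^{(s_1+1)\times s_1}, \qquad A = \Pi_{PP} \in \real^{s_1\times s_1},
\]
as one can read off from the two block equations $\Pi_{PP}\tilde{\beta}_1 + \Pi_{P,s_1+1} = 0$ and $\Pi_{s_1+1,P}\tilde{\beta}_1 + \Pi_{s_1+1,s_1+1} = 0$. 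Because $J$ has full column rank $s_1$, this yields $\Pi \succeq 0 \Leftrightarrow A \succeq 0$ together with $\mathrm{rank}(\Pi) = \mathrm{rank}(A)$. Hence it suffices to show that $A$ is strictly positive definite: this forces $\Pi$ to be PSD of rank exactly $s_1$, leaving the single zero eigenvalue of Lemma~\ref{lem:zero eigenvalue} and making all remaining eigenvalues strictly positive, i.e.\ $\eig_2(\Pi) > 0$.

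Next I identify $A$ and bound it from below. From the construction of $\Pi$ in Theorem~\ref{thm:primal dual witness construction}---and using that $I_\alpha$ contributes only to the $(s_1+1,s_1+1)$ entry while $\bigl(\sum_i z_i^* \overline{S}_i^P\bigr)_{PP} = \sum_{i\in\calI_1} X_{i_P} X_{i_P}^\T = n_1 \widehat{H}_{1_{PP}}$---one obtains $A = n_1 \widehat{H}_{1_{PP}} + \lambda_1 Z_{PP}$. On the $P\times P$ block, $\overline{W}_{jk} = \tilde{\beta}_{1,j}\tilde{\beta}_{1,k}$, so the subgradient rule either pins $Z_{jk}$ to $\sign(\tilde{\beta}_{1,j})\sign(\tilde{\beta}_{1,k})$ when both factors are nonzero, or leaves it free in $[-1,1]$ when at least one factor vanishes. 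In either case $Z_{PP}$ can be taken to equal the rank-one PSD matrix $s s^\T$, with $s_j = \sign(\tilde{\beta}_{1,j})$ on the support of $\tilde{\beta}_1$ and $s_j \in \{\pm 1\}$ chosen arbitrarily on its complement, and this choice respects $\|Z(:)\|_\infty \leq 1$. Thus $\lambda_1 Z_{PP} \succeq 0$, and Lemma~\ref{lem:sample positive definite} gives $\widehat{H}_{1_{PP}} \succeq \tfrac{C_{\min}}{2} I_{s_1}$ with probability at least $1-\calO(1/d)$ under the stated sample-size hypothesis, so that
\[
A \;\succeq\; \frac{n_1\, C_{\min}}{2}\, I_{s_1} \;+\; \lambda_1 s s^\T \;\succ\; 0.
\]

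Combining the two steps, $\Pi = J A J^\T$ is PSD of rank exactly $s_1$, so its spectrum consists of the zero eigenvalue identified in Lemma~\ref{lem:zero eigenvalue} and $s_1$ strictly positive eigenvalues; in particular $\eig_2(\Pi) > 0$. Repeating the argument verbatim with $\widehat{H}_{2_{QQ}}$, $\tilde{\beta}_2$ and $Q$ yields $\eig_2(\Lambda) > 0$. The only subtle point I foresee is correctly handling the freedom in $Z_{PP}$ when $\tilde{\beta}_1$ has zero entries, but the rank-one PSD completion above is always available within the constraint box; consequently no probabilistic argument beyond Lemma~\ref{lem:sample positive definite} is needed, which is precisely why the sample-size requirement stated in this lemma coincides with that of Lemma~\ref{lem:sample positive definite}.
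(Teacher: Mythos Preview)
Your argument is correct and follows essentially the same route as the paper: both reduce the claim to showing that the top-left $s_1\times s_1$ block $A = n_1\widehat{H}_{1_{PP}} + \lambda_1\, zz^\T$ is strictly positive definite, and then invoke Lemma~\ref{lem:sample positive definite} (together with $zz^\T\succeq 0$) to conclude. The only difference is in the linear-algebraic packaging of the reduction: you read off the factorization $\Pi = JAJ^\T$ directly from the null-space condition of Lemma~\ref{lem:zero eigenvalue}, whereas the paper writes $\Pi$ in block form, checks that the Schur complement $C - B^\T A^{-1}B$ vanishes (using the stationarity condition for $\tilde{\beta}_1$ and the explicit value of $\alpha$), and then cites Haynsworth's inertia additivity formula. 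These are the same congruence argument; your version is arguably slightly cleaner because it reuses Lemma~\ref{lem:zero eigenvalue} instead of recomputing the Schur complement. Your closing worry about the freedom in $Z_{PP}$ when $\tilde{\beta}_1$ has zero entries is moot: the paper's construction already takes $Z = \begin{bmatrix} z \\ 1\end{bmatrix}\begin{bmatrix} z^\T & 1\end{bmatrix}$ with $z=\sign(\tilde{\beta}_1)$, so $Z_{PP}=zz^\T\succeq 0$ regardless of whether some $z_j$ are zero.
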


On the one hand, Lemma~\ref{lem:strictly positive second eigenvalue} ensures that $\Pi \succeq 0$ and $\Lambda \succeq 0$, but on the other it also forces $\overline{W}$ and $\overline{U}$ to be rank-1 and unique as both $\Pi$ and $\overline{W}$ have to be positive semidefinite and $\Pi$ has exactly one vector in its nullspace (same with $\Lambda$ and $\overline{U}$). 

\subsection{Going back to Invex MLR}
\label{subsec:going back to invex MLR}

Now that we have the setting of $t_i, \overline{W}$ and $\overline{U}$ for Compact Invex MLR problem \eqref{eq:opt prob invex mlr compact}, we can extend these to the original Invex MLR problem \eqref{eq:opt prob invex mlr}. Notice that all the other entries of $W$ and $U$ are zeros, thus it readily follows that
\begin{align}
\begin{split}
W = \begin{bmatrix} \beta_1 \\ 1 \end{bmatrix} \begin{bmatrix} \beta_1^\T & 1 \end{bmatrix}, \; U = \begin{bmatrix} \beta_2 \\ 1 \end{bmatrix} \begin{bmatrix} \beta_2^\T & 1 \end{bmatrix} 
\end{split}
\end{align}
where $\beta_1 = \begin{bmatrix} \tilde{\beta}_1 \\ \bzero \end{bmatrix}$ and $\beta_2 = \begin{bmatrix} \tilde{\beta}_2 \\ \bzero \end{bmatrix}$. Furthermore, result from Lemma~\ref{lem:bound on Delta} extends directly and gives us
\begin{align}
\begin{split}
			\| \beta_1 - \beta_1^* \|_2 \leq   (2 +  b_1)\frac{2\lambda_1 \sqrt{s_1}}{C_{\min} n_1} ,\;\;\;	\| \beta_2 - \beta_2^* \|_2   \leq (2 +  b_2)\frac{2\lambda_2 \sqrt{s_2}}{C_{\min} n_2}  \; .
\end{split}
\end{align}

The last remaining thing is to show that consistency certificate C1 indeed holds which we will do in next subsection.

\subsection{Validating Consistency Certificate}
\label{subsec: validating consistency certificate}

\begin{figure*}[!ht]
	\centering
	\begin{subfigure}{.33\textwidth}
		\centering
		\includegraphics[width=\linewidth]{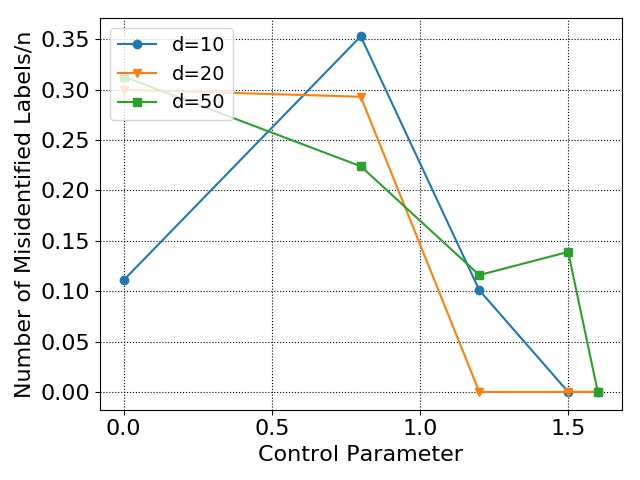}
		\caption{Misidentified labels (in ratio to $n$)}
		\label{fig:recnumsample}
	\end{subfigure}%
	\begin{subfigure}{.33\textwidth}
		\centering
		\includegraphics[width=\linewidth]{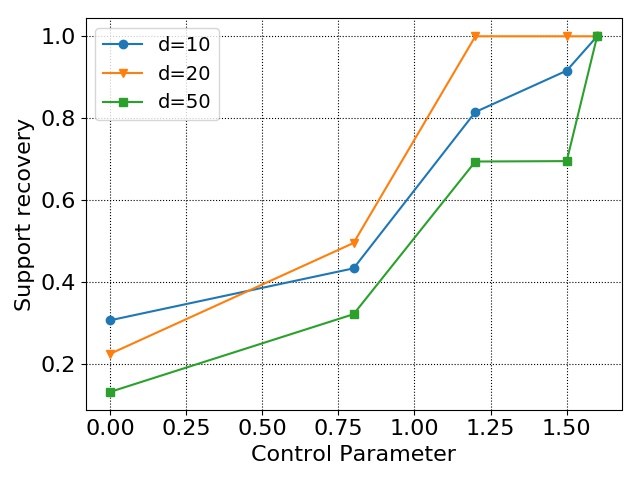}	
		\caption{Support recovery of $\beta_1^*$}
		\label{fig:recnumsamplecp}
	\end{subfigure}%
	\begin{subfigure}{.33\textwidth}
		\centering
		\includegraphics[width=\linewidth]{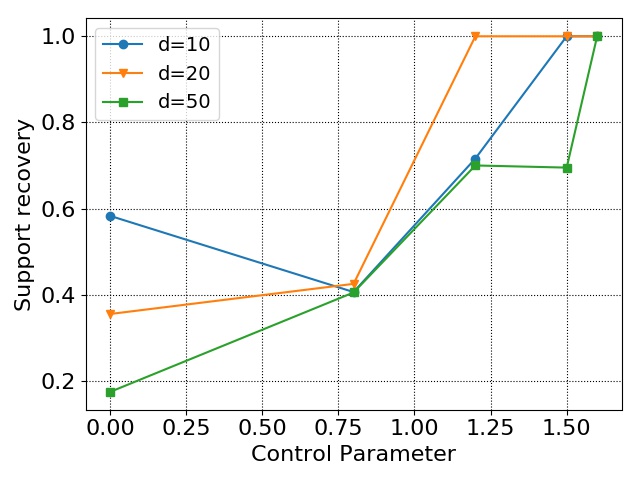}
		\caption{Support recovery of $\beta_2^*$}
		\label{fig:labelnumsample}
	\end{subfigure}
	\caption{Label and support recovery with control parameter $C_p$.}
	\label{fig:recovery}
\end{figure*}

Observe that once we substitute $t_i = t_i^*$ in optimization problem \eqref{eq:opt prob invex mlr}, it decouples in to two independent convex optimization problems involving $W$ and $U$ respectively. Furthermore, since we established that $W$ and $U$ are rank-1, we can rewrite these independent problems in terms of $\beta_1$ and $\beta_2$. Our task is to show that $\beta_{1_{P^c}} = \bzero$ and  $\beta_{2_{Q^c}} = \bzero$. It suffices to show it for $\beta_1$ as arguments for $\beta_2$ are the same. Below, we consider the simplified optimization problem in terms of $\beta_1$:
\begin{align}
\begin{split}
	\beta_1 = \arg\min_{\beta \in \real^d} \sum_{i \in \calI_1}(X_i^\T \beta - y_i )^2 + \lambda_1 (\| \beta \|_1 + 1)^2
\end{split}
\end{align}

Since we are only dealing with measurements in $\calI_1$, we can substitute $y_i = X_i^\T \beta^* + e_i$. Furthermore, $\beta_1$ must satisfy stationarity KKT condition which can be written as:
\begin{align}
\label{eq:grad}
\begin{split}
	&\frac{1}{n_1}\sum_{i \in \calI_1} X_i X_i^\T (\beta_1 - \beta^*) - \frac{1}{n_1}\sum_{i \in \calI_1} X_i e_i + \frac{1}{n_1}\lambda_1 ( \| \beta_1 \|_1 + 1) z = \bzero, 
\end{split}
\end{align} 
where $z$ is in subdifferential of $\| \beta_1 \|_1$ and $\| z \|_\infty \leq 1$. Specifically, $z_i = \sign(\beta_1), \forall i \in P$ and $z_i \in [-1, 1], \forall i \in P^c$. Our task is to show that $z$ follows strict dual feasibility, i.e., $\| z_{P^c} \|_\infty < 1$. We decompose equation~\eqref{eq:grad} in to two parts -- one corresponding to entries in $P$ and the other corresponding to entries in $P^c$. For entries in $P$, we have
\begin{align}
\label{eq:grad support}
\begin{split}
	&\frac{1}{n_1}\sum_{i \in \calI_1} X_{i_P} X_{i_P}^\T  (\beta_{1_P} - \beta_{1_P}^*) - \frac{1}{n_1}\sum_{i \in \calI_1} X_{i_P} e_i
	+\frac{1}{n_1} \lambda_1( \| \beta_1 \|_1 + 1) z_P = \bzero
\end{split}
\end{align}
Similarly, for entries in $P^c$, we have
\begin{align*}
\begin{split}
&\frac{1}{n_1}\sum_{i \in \calI_1} X_{i_{P^c}} X_{i_P}^\T (\beta_{1_P} - \beta_{1_P}^*)   - \frac{1}{n_1}\sum_{i \in \calI_1} X_{i_{P^c}} e_i +\frac{1}{n_1} \lambda_1 ( \| \beta_1 \|_1 + 1)  z_{P^c} = \bzero
\end{split}
\end{align*}
After rearranging the terms and substituting for $(\beta_{1_P} - \beta_{1_P}^*) $ from equation~\eqref{eq:grad support}, we get
\begin{align*}
\begin{split}
	&\frac{\lambda_1}{n_1}(1 + \| \beta_1 \|_1) z_{P^c} = -  \widehat{H}_{P^cP}  \widehat{H}_{PP}^{-1} ( \frac{1}{n_1}\sum_{i \in \calI_1} X_{i_P} e_i - \frac{1}{n_1} \lambda_1( \| \beta_1 \|_1 + 1) z_P ) + \frac{1}{n_1}\sum_{i \in \calI_1} X_{i_{P^c}} e_i
\end{split}
\end{align*}

Let $\bar{\lambda}_1 = \frac{\lambda_1}{n_1}$ and note that $\| \beta_1 \|_1 \geq 0$, using norm inequalities we can rewrite the above equation as:
\begin{align*}
\begin{split}
	&\| z_{P^c} \|_\infty \leq \| \widehat{H}_{P^cP}  \widehat{H}_{PP}^{-1}  \|_{\infty} ( \| \frac{1}{\bar{\lambda}_1}  \frac{1}{n_1}\sum_{i \in \calI_1} X_{i_P} e_i \|_\infty + \| z_P \|_\infty ) + \| \frac{1}{\bar{\lambda}_1} \frac{1}{n_1}\sum_{i \in \calI_1} X_{i_{P^c}} e_i \|_\infty
\end{split}
\end{align*}

We know that $\| \widehat{H}_{P^cP}  \widehat{H}_{PP}^{-1}  \|_{\infty} \leq (1 - \frac{\xi}{2})$ for some $\xi \in (0, 1]$. The following lemma provides bounds on  $\| \frac{1}{\bar{\lambda}_1}  \frac{1}{n_1}\sum_{i \in \calI_1} X_{i_P} e_i \|_\infty$ and $\| \frac{1}{n_1}\sum_{i \in \calI_1} X_{i_{P^c}} e_i \|_\infty$.

\begin{lemma}
	\label{lem:bound X_se and X_s^ce}
Let $\lambda_1 \geq \frac{64 \rho \sigma_e}{\xi} \sqrt{n_1\log d}$. Then the following holds true:
\begin{align*}
\begin{split}
&\prob(\| \frac{1}{\bar{\lambda}_1 } \frac{1}{n_1} \sum_{i \in \calI_1} X_{i_P} e_i \|_{\infty} \geq \frac{\xi}{8 - 4\xi}) \leq \calO(\frac{1}{d}), \;\;\; \prob(\| \frac{1}{\bar{\lambda}_1} \frac{1}{n_1} \sum_{i \in \calI_1} X_{i_{P^c}} e_i \|_{\infty} \geq \frac{\xi}{8}) \leq \calO(\frac{1}{d}) 
\end{split}
\end{align*}
\end{lemma}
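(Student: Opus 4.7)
The plan is to exploit the sub-Gaussianity of both $X_{ij}$ and $e_i$ to conclude that each product $X_{ij} e_i$ is sub-exponential, and then apply a Bernstein-type concentration coordinate-wise followed by a union bound over $P$ (respectively $P^c$). The entire argument is standard; the only bookkeeping to watch is that the absolute constants agree with the factor $64/\xi$ baked into the hypothesis on $\lambda_1$.

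Fix a coordinate $j$. By assumption, $X_{ij}$ is zero-mean sub-Gaussian with parameter at most $\rho$ and $e_i$ is independent, zero-mean sub-Gaussian with parameter $\sigma_e$. A standard Orlicz-norm product inequality gives $\| X_{ij} e_i \|_{\psi_1} \lesssim \rho \sigma_e$, so $X_{ij} e_i$ is sub-exponential with parameter proportional to $\rho \sigma_e$. Bernstein's inequality for i.i.d.\ sub-exponential sums yields
\begin{align*}
\prob\!\left( \left| \frac{1}{n_1} \sum_{i \in \calI_1} X_{ij} e_i \right| \geq t \right) \leq 2 \exp\!\left( -c\, n_1 \min\!\left( \frac{t^2}{\rho^2 \sigma_e^2},\, \frac{t}{\rho \sigma_e} \right) \right)
\end{align*}
for an absolute constant $c > 0$. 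I would pick $t = C' \rho \sigma_e \sqrt{\log d / n_1}$ with $C'$ a large enough absolute constant (so that the per-coordinate tail is $\calO(1/d^2)$); the sample-complexity assumption $n_1 = \Omega(s_1^3 \log^2 d / \tau_0)$ from Theorem~\ref{thm:main theorem} keeps $t/(\rho \sigma_e) \ll 1$, so the $\min$ in Bernstein is realized by $t^2/(\rho \sigma_e)^2$.

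A union bound over the at-most-$d$ coordinates in $P$ (and separately in $P^c$) then gives, with probability at least $1 - \calO(1/d)$, the uniform bound $\|\tfrac{1}{n_1}\sum_{i \in \calI_1} X_{i_P} e_i\|_\infty \leq C' \rho \sigma_e \sqrt{\log d / n_1}$ and the analogous bound for $P^c$. Dividing by $\bar{\lambda}_1 = \lambda_1 / n_1$ and invoking $\lambda_1 \geq \frac{64 \rho \sigma_e}{\xi} \sqrt{n_1 \log d}$ gives
\begin{align*}
\frac{1}{\bar{\lambda}_1} \cdot C' \rho \sigma_e \sqrt{\frac{\log d}{n_1}} \;=\; \frac{C' \rho \sigma_e \sqrt{n_1 \log d}}{\lambda_1} \;\leq\; \frac{C' \xi}{64}.
\end{align*}
Since $\xi \in (0,1]$ implies $\frac{1}{8-4\xi} \geq \frac{1}{8}$, choosing $C' \leq 8$ is enough to drive this quantity below both $\frac{\xi}{8-4\xi}$ and $\frac{\xi}{8}$, as required.

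The main obstacle is the constant bookkeeping in the sub-exponential Bernstein step: the constant $C'$ needed to make the per-coordinate tail small enough (so that after a factor of $d$ from the union bound the probability is still $\calO(1/d)$) must not exceed the value $8$ that is exactly what the $64/\xi$ prefactor in $\lambda_1$ can absorb. A careful application of the Orlicz-norm/Bernstein bound, keeping track of absolute constants, comfortably yields a viable $C'$ in this range, and the Gaussian-regime reduction of the Bernstein $\min$ is automatic under the stated sample complexity; no delicate cancellation is required, just a constant comparison with room to spare.
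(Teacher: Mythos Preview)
Your proposal is correct and follows essentially the same route as the paper: each coordinate of the sum is recognized as an average of products of independent sub-Gaussians (hence sub-exponential), a Bernstein-type tail bound is applied per coordinate, a union bound is taken over the at most $d$ coordinates, and the resulting threshold is compared to $\bar\lambda_1$ using the hypothesis $\lambda_1 \geq \tfrac{64\rho\sigma_e}{\xi}\sqrt{n_1\log d}$. The paper normalizes explicitly (writing $\tfrac{X_{ji}}{\rho}\tfrac{e_j}{\sigma_e}$ as sub-exponential with parameters $(4\sqrt 2,2)$ and quoting the bound $2\exp(-n_1 t^2/64)$ for $0\le t\le 8$) whereas you phrase the same step through Orlicz norms and the two-regime Bernstein inequality, but the arguments and the constant bookkeeping are identical in substance.
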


It follows that $\| z_{P^c} \|_\infty \leq 1 - \frac{\xi}{4}$ with probability at least $1 - \calO(\frac{1}{d})$. Thus, C1 indeed holds with high probability.

\section{Experimental Validation}
\label{sec:experimental validation}

Note that we are not proposing any new algorithm in our paper. However, to validate our theoretical results we performed experiments on synthetic data. We generated response $y$ using Gaussian random variables $X$ and chose regression parameter $\beta^*_1$ (or $\beta^*_2$) based on the label of the samples. We fixed the sparsity $s_1= s_2 = 4$, however supports were not necessarily the same for both the regression parameter vectors.  We varied $n_1$ and $n_2$ according to our theorems, i.e., both were varied with $ 10^{C_p} \log^2 d$ for $d=10, 20$ and $50$ where $C_p$ is a control parameters. The regularizers were kept according to our theorem and were varied as $\calO(\sqrt{n_1 \log d})$ and  $\calO(\sqrt{n_2 \log d})$. We measured performance of our algorithm based on the label recovery (in ratio to supplied $n$) and support recovery for both parameter vectors. The experiments were run three times independently. Note how we make zero mistakes as we increase number of samples. Similarly, support recovery (ratio of intersection and union with correct support) for both parameter vectors goes to $1$ as we increase sample size. It should be noted that while we do not propose any algorithm but our method is free of any initialization requirement. A projected subgradient method is used to check convergence for our problem which is achieved without any requirement on initialization. In fact, any algorithm which converges to a stationary point should work for our framework.

\section{Concluding Remarks}
\label{sec:conclusion}

We provide a novel formulation of invex MLR. We show that invexity of our optimization problem allows for a tractable solution. We provide provable theoretical guarantees for our solution. The sample complexity of our method is polynomial in terms of sparsity and logarithmic in terms of the dimension of the true parameter. Our method helps to identify labels exactly and recovers regression parameter vectors with correct support and correct sign. It would be interesting to think about extending our ideas to mixture of more than two groups of regressions in future.

% In the unusual situation where you want a paper to appear in the
% references without citing it in the main text, use \nocite
%\nocite{langley00}

%%%%%%%%%%%%%%%%%%%%%%%%%%%%%%%%%%%%%%%%%%%%%%%%%%%%%%%%%%%%%%%%%%%%%%%%%%%%%%%
%%%%%%%%%%%%%%%%%%%%%%%%%%%%%%%%%%%%%%%%%%%%%%%%%%%%%%%%%%%%%%%%%%%%%%%%%%%%%%%
% APPENDIX
%%%%%%%%%%%%%%%%%%%%%%%%%%%%%%%%%%%%%%%%%%%%%%%%%%%%%%%%%%%%%%%%%%%%%%%%%%%%%%%
%%%%%%%%%%%%%%%%%%%%%%%%%%%%%%%%%%%%%%%%%%%%%%%%%%%%%%%%%%%%%%%%%%%%%%%%%%%%%%%
%\newpage
\appendix
\onecolumn

%\noindent\rule{\textwidth}{3pt}
%\begin{center}
%	{\Large \textbf{Supplementary Material: Sparse Mixed Linear Regression with Guarantees: Taming an Intractable Problem with Invex Relaxation}}
%\end{center}
%\noindent\rule{\textwidth}{1pt}

\section{Continuous Relaxation of Standard MLR is non-convex}
\label{sec:std mlr nonconvex}

It suffices to prove that the objective function $l(z, \beta_1, \beta_2)$ of optimization problem~\eqref{eq:opt prob standard mlr} is non-convex when $z_i$ is allowed to be between $0$ and $1$. We note that
\begin{align}
\label{eq:mix_lin}
\begin{split}
\begin{matrix}
	l(z, \beta_1, \beta_2) =  \sum_{i=1}^n z_i (y_i - X_i^\T \beta_1)^2 + (1 - z_i) (y_i - X_i^\T \beta_2)^2
\end{matrix}
\end{split}
\end{align}
where $\beta_1 \in \real^d, \beta_2 \in \real^d$ and $z_i \in [0, 1], \forall i \in \seq{n}$. Let $\Theta = (z, \beta_1, \beta_2, \bar{z}, \bar{\beta}_1, \bar{\beta}_2)$. We consider the following quantity:
\begin{align}
\begin{split}
F(\Theta) = f(z, \beta_1, \beta_2) - f(\bar{z}, \bar{\beta}_1, \bar{\beta}_2) - \sum_{i=1}^n \diff{f}{\bar{z}_i} (z_i - \bar{z}_i) - \diff{f}{\bar{\beta}_1}^\T (\beta_1 - \bar{\beta}_1) -  \diff{f}{\bar{\beta}_2}^\T (\beta_2 - \bar{\beta}_2)
\end{split}
\end{align}

where
\begin{align}
\begin{split}
\diff{f}{\bar{z}_i} &= (y_i - X_i^\T \bar{\beta}_1)^2 - (y_i - X_i^\T \bar{\beta}_2)^2, \quad \forall i \in \seq{n} \\
\diff{f}{\bar{\beta}_1} &= \sum_{i=1}^n -2 \bar{z}_i X_i (y_i - X_i^\T \bar{\beta}_1) \\ 
\diff{f}{\bar{\beta}_2} &= \sum_{i=1}^n -2(1 - \bar{z}_i) X_i (y_i - X_i^\T \bar{\beta}_2)
\end{split}
\end{align}

It suffices to show that $F(\Theta)$ changes sign for different feasible values of $\Theta$. We choose the following variables:
\begin{align}
\begin{split}
	z_i = 0, \bar{z}_i = \frac{1}{2} \quad \forall i \in \seq{n} \\
	\beta_{1_k} = u_1, \beta_{1_j} = 0, \forall j \ne k \\  
	\beta_{2_l} = u_2, \beta_{2_j} = 0, \forall j \ne l \\  
	\bar{\beta}_{1_k} = w_1, \bar{\beta}_{1_j} = 0, \forall j \ne k \\  
	\bar{\beta}_{2_l} = w_2, \bar{\beta}_{2_j} = 0, \forall j \ne l \\
	u_1 = w_1 - (u_2 - w_2)
\end{split}
\end{align}
Note that choice of $w_1, u_2$ and $w_2$ can be arbitrary. This simplifies $F(\Theta)$:
\begin{align}
\begin{split}
	F(\Theta) = \sum_{i=1}^n (y_i - u_2 X_{il})^2 -  \sum_{i=1}^n (y_i - w_2 X_{il})^2
\end{split}
\end{align}

Consider the case when $X_{il} > 0, \forall i \in \seq{n}$. Then choosing $u_2 < w_2$ makes $F(\Theta) > 0$ while choosing $u_2 > w_2$ makes $F(\Theta) < 0$. This proves our claim.

\section{Proof of Lemma~\ref{lem:invexity}}
\label{proof:lem invexity}

\paragraph{Lemma~\ref{lem:invexity}}
\emph{For $(t, W, U) \in C$, the functions $ f(t, W, U) = \sum_{i=1}^n \frac{1}{2} \inner{S_i}{W + U} + \sum_{i=1}^n \frac{1}{2} t_i \inner{S_i}{W - U}, g(t, W, U) = \|W(:)\|_1$ and $h(t, W, U) = \| U(:)\|_1$ are $\eta$-invex for $\eta(t, \bar{t}, W, \overline{W}, U, \overline{U}) \triangleq \begin{bmatrix} \eta_t \\ \eta_{W} \\ \eta_{U}\end{bmatrix}$, where $\eta_t = \mathbf{0} \in \real^n, \eta_{W} = -\overline{W}$ and $\eta_{U} = - \overline{U}$. We abuse the vector/matrix notation (by ignoring the dimensions) for clarity of presentation, and avoid the vectorization of matrices.
}
\begin{proof}
	We know $f(t, W, U) = \sum_{i=1}^n \frac{t_i + 1}{2} \inner{S_i}{W} + \frac{1 - t_i}{2} \inner{S_i}{U}$. Then,
	\begin{align}
	\begin{split}
	\diff{f}{t_i} &= \frac{1}{2}\inner{S_i}{W - U} \\
	\diff{f}{W} &= \sum_{i=1}^n \frac{t_i + 1}{2}  S_i \\
	\diff{f}{U} &= \sum_{i=1}^n \frac{1 - t_i}{2} S_i 
	\end{split}
	\end{align}
	
	To prove that $f(t, W, U)$ is invex, we need to show that 
	\begin{align}
	\label{eq:invexity_cond}
	\begin{split}
	f(t, W, U) - f(\bar{t}, \bar{W}, \bar{U}) - \sum_{i=1}^n \eta_{t_i} \diff{f}{\bar{z}_i} - \inner{\eta_W}{\diff{f}{\bar{W}}} - \inner{\eta_U}{\diff{f}{\bar{U}}} \geq 0
	\end{split}
	\end{align} 
	We take $\eta_t = \mathbf{0} \in \real^n, \eta_{W} = -\overline{W}$ and $\eta_{U} = - \overline{U}$ and expand LHS of equation~\eqref{eq:invexity_cond} as follows:
	\begin{align}
	\label{eq:invexity_cond1}
	\begin{split}
	&\sum_{i=1}^n \frac{t_i + 1}{2} \inner{S_i}{W} + \frac{1 - t_i}{2} \inner{S_i}{U} - \sum_{i=1}^n \frac{\bar{t}_i + 1}{2} \inner{S_i}{\bar{W}} - \frac{1 - \bar{t}_i}{2} \inner{S_i}{\bar{U}}  + \inner{\overline{W}}{\sum_{i=1}^n \frac{\bar{t}_i + 1}{2} S_i } + \inner{\overline{U}}{\sum_{i=1}^n  \frac{1 - \bar{t}_i}{2} S_i} \\
	&=  \sum_{i=1}^n \frac{t_i + 1}{2} \inner{S_i}{W} + \frac{1 - t_i}{2} \inner{S_i}{U} \\
	& \geq 0 
	\end{split}
	\end{align}
	The last inequality holds because $S_i$, $W$ and $U$ are all positive semidefinite and $t_i \in [-1, 1]$.
	
	Similarly, 
	\begin{align}
	\begin{split}
	&g(t, W, U) - g(\bar{t}, \bar{W}, \bar{U}) - \sum_{i=1}^n \eta_{t_i} \diff{g}{\bar{z}_i} - \inner{\eta_W}{\diff{g}{\bar{W}}} - \inner{\eta_U}{\diff{g}{\bar{U}}} \\
	&= \| W(:) \|_1 - \| \bar{W}(:) \|_1 +  \| \bar{W}(:) \|_1 \geq 0
	\end{split}
	\end{align}
	and 
	\begin{align}
	\begin{split}
	&h(t, W, U) - h(\bar{t}, \bar{W}, \bar{U}) - \sum_{i=1}^n \eta_{t_i} \diff{h}{\bar{z}_i} - \inner{\eta_W}{\diff{h}{\bar{W}}} - \inner{\eta_U}{\diff{h}{\bar{U}}} \\
	&= \| U(:) \|_1 - \| \bar{U}(:) \|_1 +  \| \bar{U}(:) \|_1 \geq 0
	\end{split}
	\end{align}
\end{proof}

\section{Proof of Lemma~\ref{lem:sample positive definite}} 
\label{proof:lem:sample positive definite}

\paragraph{Lemma~\ref{lem:sample positive definite}}
\emph{If Assumption \ref{assum:postive definite} holds and $n_1 = \Omega(\frac{s_1 + \log d}{C_{\min}^2})$ and $n_2 = \Omega(\frac{s_2 + \log d}{C_{\min}^2})$ , then 
	\begin{align*}
	\min( \eig_{\min}(\widehat{H}_{1_{PP}}), \eig_{\min}(\widehat{H}_{2_{QQ}}) ) \geq \frac{C_{\min}}{2}
	\end{align*}
	 and 
	 \begin{align*}
	 \max( \eig_{\max}(\widehat{H}_{1_{PP}}), \eig_{\max}(\widehat{H}_{2_{QQ}}) ) \leq \frac{3C_{\max}}{2}
	 \end{align*}
	  with probability at least $1 - \calO(\frac{1}{d})$.
}
\begin{proof}
	We prove the Lemma for a general support $S$ and samples $n$. The results follow when we substitute $S$ by $P$ and $Q$ and $n$ by $n_1$ or $n_2$ based on the context. By the Courant-Fischer variational representation~\citep{horn2012matrix}:
	\begin{align}
	\begin{split}
	\eig_{\min}(\E(X_iX_i^\T)_{SS}) = \min_{\|y \|_2 = 1} y^\T \E(X_i X_i^\T)_{SS} y &= \min_{\|y \|_2 = 1} y^\T (\E(X_i X_i^\T)_{SS} - \frac{1}{n} X^\T_S X_S + \frac{1}{n} X^\T_S X_S ) y \\
	&\leq y^\T (\E(X_i X_i^\T)_{SS} - \frac{1}{n} X^\T_S X_S + \frac{1}{n} X^\T_S X_S ) y \\
	&= y^\T (\E(X_i X_i^\T)_{SS} - \frac{1}{n} X^\T_S X_S) y + y^\T \frac{1}{n} X^\T_S X_S  y
	\end{split}
	\end{align}
	It follows that
	\begin{align}
	\begin{split}
	\eig_{\min}(\frac{1}{n} X^\T_S X_S) \geq C_{\min} - \| \E(X_i X_i^\T)_{SS} - \frac{1}{n} X^\T_S X_S \|_2
	\end{split}
	\end{align}
	The term $\| \E(X_i X_i^\T)_{SS} - \frac{1}{n} X^\T_S X_S \|_2$ can be bounded using Proposition 2.1 in \cite{vershynin2012close} for sub-Gaussian random variables. In particular,
	\begin{align}
	\begin{split}
	\prob(\| \E(X_i X_i^\T)_{SS} - \frac{1}{n} X^\T_S X_S \|_2 \geq \epsilon) \leq 2  \exp(-c\epsilon^2n + s) 
	\end{split}
	\end{align}
	for some constant $c > 0$. Taking $\epsilon = \frac{C_{\min}}{2}$, we show that $\eig_{\min}(\frac{1}{n}X^\T_S X_S) \geq \frac{C_{\min}}{2}$ with probability at least $1 - 2  \exp(- \frac{cC_{\min}^2 n}{4} + |S|)$. The specific results for $n_1$ and $n_2$ follow directly.
	
	Remark: Similarly, it can be shown that $\eig_{\max}(\frac{1}{n}X^\T_S X_S) \leq \frac{3C_{\max}}{2}$ with probability at least $1 - 2  \exp(- \frac{cC_{\max}^2 n}{4} + |S|)$.
\end{proof}

\section{Proof of Lemma~\ref{lem:sample mutual incoherence condition}}
\label{proof: lem:sample mutual incoherence condition}

\paragraph{Lemma \ref{lem:sample mutual incoherence condition}}
\emph{If Assumption \ref{assum:mutual incoherence condition} holds and $n_1 = \Omega(\frac{s_1^3 (\log s_1 + \log d)}{\tau(C_{\min}, \xi, \sigma, \Sigma)})$ and $n_2 = \Omega(\frac{s_2^3 (\log s_2 + \log d)}{\tau(C_{\min}, \xi, \sigma, \Sigma)})$, then 
	\begin{align*}
	\max( \| \widehat{H}_{P^c P} \widehat{H}_{PP}^{-1} \|_{\infty},  \| \widehat{H}_{Q^c Q} \widehat{H}_{QQ}^{-1} \|_{\infty})  \leq 1 - \frac{\xi}{2}
	\end{align*}
	 with probability at least $1 - \calO(\frac{1}{d})$ where $\tau(C_{\min}, \xi, \sigma, \Sigma)$ is a constant independent of $n_1, n_2, d, s_1$ and $s_2$.
}
\begin{proof}
		We prove the Lemma for a general support $S$ (and corresponding non-support $S^c$) and samples $n$. The results follow when we substitute $S$ by $P$ and $Q$ and $n$ by $n_1$ or $n_2$ based on the context. Let $|S| = s$ and $|S^c| = d - s$. Before we prove the result of Lemma \ref{lem:sample mutual incoherence condition}, we will prove a helper lemma. 
	\begin{lemma}
		\label{lem:helper mutual incoherence}
		If Assumption \ref{assum:mutual incoherence condition} holds then for some $\delta > 0$, the following inequalities hold:
		\begin{align}
		\label{eq:helper mutual incoherence}
		\begin{split}
		&\prob( \| \bHhat_{S^cS} - \bH_{S^cS} \|_{\infty} \geq \delta ) \leq 4 (d - s) s \exp( - \frac{n \delta^2}{128 s^2 (1+4\sigma^2) \max_{l} \Sigma_{ll}^2}) \\
		&\prob( \| \bHhat_{SS} - \bH_{SS} \|_{\infty} \geq \delta ) \leq 4 s^2 \exp( - \frac{n \delta^2}{128 s^2 (1+4\sigma^2) \max_{l} \Sigma_{ll}^2}) \\
		&\prob( \| (\bHhat_{SS})^{-1} - (\bH_{SS})^{-1} \|_{\infty} \geq \delta ) \leq 2\exp(- \frac{c\delta^2 C_{\min}^4n}{4 s} + s)+ 2 \exp( - \frac{cC_{\min}^2n}{4} + s)
		\end{split}
		\end{align}
	\end{lemma}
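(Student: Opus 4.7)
The plan is to reduce each of the three bounds to concentration of simpler scalar quantities and then reassemble via union bound and standard matrix inequalities.

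For the first two inequalities I would start with an entry-wise concentration of $\widehat{H}_{jk} - H_{jk} = \frac{1}{n}\sum_{i=1}^n \bigl(X_{ij}X_{ik} - \E[X_{ij}X_{ik}]\bigr)$. Using the polarization identity $X_{ij}X_{ik} = \frac{1}{4}\bigl[(X_{ij}+X_{ik})^2 - (X_{ij}-X_{ik})^2\bigr]$, the problem reduces to concentration of centered squared sub-Gaussian variables. Since $X_i$ is sub-Gaussian, any linear combination $X_{ij}\pm X_{ik}$ is sub-Gaussian with variance proxy on the order of $(\Sigma_{jj}+\Sigma_{kk})\sigma^2$, and hence its square is sub-exponential. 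Applying a Bernstein-type tail bound to the centered average gives $\prob(|\widehat{H}_{jk} - H_{jk}| \geq t) \leq 4\exp\bigl(-c\, n t^2 /[(1+4\sigma^2)\max_l \Sigma_{ll}^2]\bigr)$ in the small-deviation regime, where the factor $4$ accounts for the two polarization terms each having a two-sided tail.

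Next, I would convert the entry-wise bound to the induced $\ell_\infty$ matrix norm via a union bound. Since each row of $\widehat{H}_{S^c S} - H_{S^c S}$ contains $s$ entries, the event $\|\widehat{H}_{S^c S} - H_{S^c S}\|_\infty \geq \delta$ forces at least one entry to exceed $\delta/s$ in absolute value. Union-bounding the entry-wise tail over the $(d-s)s$ entries in $S^c\times S$ and plugging in $t = \delta/s$ yields the first inequality; the $s^2$ in the exponent denominator comes from $(\delta/s)^2$. The second inequality is identical with the $s^2$ entries in $S\times S$.

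For the third inequality, I would invoke the resolvent identity $A^{-1} - B^{-1} = A^{-1}(B - A)B^{-1}$ with $A = \widehat{H}_{SS}$ and $B = H_{SS}$, combined with $\|M\|_\infty \leq \sqrt{s}\,\|M\|_2$ for $s\times s$ matrices, to obtain
\[
\|(\widehat{H}_{SS})^{-1} - (H_{SS})^{-1}\|_\infty \leq \sqrt{s}\,\|(\widehat{H}_{SS})^{-1}\|_2 \cdot \|\widehat{H}_{SS} - H_{SS}\|_2 \cdot \|(H_{SS})^{-1}\|_2 .
\]
On the event $\eig_{\min}(\widehat{H}_{SS}) \geq C_{\min}/2$, which by the proof of Lemma~\ref{lem:sample positive definite} occurs except with probability $2\exp(-cC_{\min}^2 n/4 + s)$, we have $\|(\widehat{H}_{SS})^{-1}\|_2 \leq 2/C_{\min}$, and Assumption~\ref{assum:postive definite} gives $\|(H_{SS})^{-1}\|_2 \leq 1/C_{\min}$. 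Hence it suffices that $\|\widehat{H}_{SS} - H_{SS}\|_2 \leq \delta C_{\min}^2/(2\sqrt{s})$. Invoking the same spectral-norm concentration used in the proof of Lemma~\ref{lem:sample positive definite} (Proposition~2.1 of \cite{vershynin2012close}) with this threshold contributes the first exponential term $2\exp(-c\delta^2 C_{\min}^4 n/(4s) + s)$, and a final union bound produces the displayed two-term tail.

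The main obstacle is tracking the scalar constants in the entry-wise step so that the exponent denominator takes exactly the form $128\, s^2 (1+4\sigma^2)\max_l \Sigma_{ll}^2$; this requires careful bookkeeping of the polarization coefficients and of the sub-exponential moment bound for squared sub-Gaussians with variance proxy depending on both $\Sigma_{jj}$ and $\sigma$. The matrix-norm conversion in step two and the resolvent identity in step three are otherwise routine, and the inverse bound cleanly decomposes into the spectral deviation event and the minimum-eigenvalue event that is already controlled by Lemma~\ref{lem:sample positive definite}.
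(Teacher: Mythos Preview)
Your proposal is correct and essentially mirrors the paper's proof: the paper obtains the first two inequalities by the same union-bound reduction to an entrywise tail $\prob(|A_{ij}|\geq \delta/s)$, citing Lemma~1 of \cite{ravikumar2011high} (whose proof is exactly the polarization-plus-sub-exponential argument you sketch), and handles the third inequality via the same resolvent identity, the $\|\cdot\|_\infty \leq \sqrt{s}\,\|\cdot\|_2$ conversion, and the same two events ($\eig_{\min}(\widehat{H}_{SS}) \geq C_{\min}/2$ from Lemma~\ref{lem:sample positive definite} and the spectral deviation $\|\widehat{H}_{SS}-H_{SS}\|_2 \leq \delta C_{\min}^2/(2\sqrt{s})$ from \cite{vershynin2012close}). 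The only cosmetic difference is that the paper invokes the cited lemma rather than spelling out the polarization step.
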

	\begin{proof}
		\label{proof:helper mutual incoherence}
		Let $A_{ij}$ be $(i, j)$-th entry of $\bHhat_{S^cS} - \bH_{S^cS}$. Clearly, $\E(A_{ij}) = 0$. By using the definition of the $\| \cdot\|_{\infty}$ norm, we can write:
		\begin{align}
		\begin{split}
		\prob(\| \bHhat_{S^cS} - \bH_{S^cS} \|_{\infty} \geq \delta) &= \prob(\max_{i \in S^c} \sum_{j \in S} |A_{ij}| \geq \delta) \\
		& \leq (d - s) \prob(\sum_{j \in S} |A_{ij}| \geq \delta) \\
		&\leq (d - s) s \prob(|A_{ij}| \geq \frac{\delta}{s})
		\end{split}
		\end{align}
		where the second last inequality comes as a result of the union bound across entries in $S^c$ and the last inequality is due to the union bound across entries in $S$. Recall that $X_i, i \in \seq{d}$ are zero mean random variables with covariance $\Sigma$ and each $\frac{X_i}{\sqrt{\Sigma_{ii}}}$ is a sub-Gaussian random variable with parameter $\sigma$. Using the results from Lemma 1 of \cite{ravikumar2011high}, for some $\delta \in (0, s \max_{l} \Sigma_{ll} 8(1 + 4 \sigma^2))$, we can write:
		\begin{align}
		\begin{split}
		\prob(|A_{ij}| \geq \frac{\delta}{s}) \leq 4 \exp( - \frac{n \delta^2}{128 s^2 (1+4\sigma^2) \max_{l} \Sigma_{ll}^2})
		\end{split}
		\end{align} 
		Therefore,
		\begin{align}
		\begin{split}
		&\prob(\| \bHhat_{S^cS} - \bH_{S^cS} \|_{\infty} \geq \delta)  \leq 4 (d - s) s \exp( - \frac{n \delta^2}{128 s^2 (1+4\sigma^2) \max_{l} \Sigma_{ll}^2})
		\end{split}
		\end{align}
		Similarly, we can show that 
		\begin{align}
		\begin{split}
		&\prob(\| \bHhat_{SS} - \bH_{SS} \|_{\infty} \geq \delta)  \leq 4 s^2 \exp( - \frac{n \delta^2}{128 s^2 (1+4\sigma^2) \max_{l} \Sigma_{ll}^2})
		\end{split}
		\end{align}
		Next, we will show that the third inequality in \eqref{eq:helper mutual incoherence} holds. Note that
		\begin{align}
		\begin{split}
		\| (\bHhat_{S^cS})^{-1} - (\bH_{S^cS})^{-1} \|_{\infty} &= \| (\bH_{SS})^{-1} (\bH_{SS} - \bHhat_{SS}) (\bHhat_{SS})^{-1} \|_{\infty} \\
		&\leq \sqrt{s}  \| (\bH_{SS})^{-1} (\bH_{SS} - \bHhat_{SS}) (\bHhat_{SS})^{-1} \|_{2} \\
		&\leq \sqrt{s}  \| (\bH_{SS})^{-1} \|_2 \| (\bH_{SS} - \bHhat_{SS})\|_2 \| (\bHhat_{SS})^{-1} \|_{2} \\
		\end{split}
		\end{align}   
		Note that $\| \bH_{SS} \|_2 \geq C_{\min}$, thus $\| (\bH_{SS})^{-1} \|_2 \leq \frac{1}{C_{\min}}$. Similarly,  $\| \bH_{SS} \|_2 \geq \frac{C_{\min}}{2}$ with probability at least $1 - 2 \exp( - \frac{cC_{\min}^2n}{4} + s)$. We also have $\| (\bH_{SS} - \bHhat_{SS})\|_2 \leq \epsilon$ with probability at least $1 - 2\exp(-c\epsilon^2 n + s)$. Taking $\epsilon = \delta \frac{C_{\min}^2}{2 \sqrt{s}}$, we get 
		\begin{align}
		\begin{split}
		\prob( \| (\bH_{SS} - \bHhat_{SS})\|_2 \geq  \delta \frac{C_{\min}^2}{2 \sqrt{s}} ) \leq 2\exp(- \frac{c\delta^2 C_{\min}^4n}{4 s} + s)
		\end{split}
		\end{align}
		It follows that $\| (\bHhat_{SS})^{-1} - (\bH_{SS})^{-1} \|_{\infty} \leq \delta$ with probability at least $1 - 2\exp(- \frac{c\delta^2 C_{\min}^4n}{4 s} + s) - 2 \exp( - \frac{cC_{\min}^2n}{4} + s)$.
	\end{proof}
	Now we are ready to show that the statement of Lemma \ref{lem:sample mutual incoherence condition} holds using the results from Lemma \ref{lem:helper mutual incoherence}. We will rewrite $\bHhat_{S^cS} (\bHhat_{SS})^{-1}$ as the sum of four different terms:
	\begin{align}
	\begin{split}
	\bHhat_{S^cS} (\bHhat_{SS})^{-1} = T_1 + T_2 + T_3 + T_4,
	\end{split}
	\end{align} 
	where 
	\begin{align}
	\begin{split}
	T_1 &\triangleq \bHhat_{S^cS} ( (\bHhat_{SS})^{-1} - (\bH_{SS})^{-1} ) \\
	T_2 &\triangleq (\bHhat_{S^cS} - \bH_{S^cS}) (\bH_{SS})^{-1} \\
	T_3 &\triangleq (\bHhat_{S^cS} - \bH_{S^cS})((\bHhat_{SS})^{-1} - (\bH_{SS})^{-1}) \\
	T_4 &\triangleq \bH_{S^cS} (\bH_{SS})^{-1}   \, .
	\end{split}
	\end{align}
	Then it follows that  $\| \bHhat_{S^cS} (\bHhat_{SS})^{-1} \|_{\infty} \leq \| T_1 \|_{\infty} + \| T_2 \|_{\infty} + \| T_3 \|_{\infty} + \| T_4 \|_{\infty}$. Now, we will bound each term separately. First, recall that Assumption \ref{assum:mutual incoherence condition} ensures that $\| T_4 \|_{\infty} \leq 1 - \xi$.
	\paragraph{Controlling $T_1$.} We can rewrite $T_1$ as,
	\begin{align}
	\begin{split}
	T_1 = - \bH_{S^cS} (\bH_{SS})^{-1} (\bHhat_{SS} - \bH_{SS}) (\bHhat_{SS})^{-1}
	\end{split}
	\end{align}
	then,
	\begin{align}
	\begin{split}
	\| T_1 \|_{\infty} &= \| \bH_{S^cS} (\bH_{SS})^{-1} (\bHhat_{SS} - \bH_{SS}) (\bHhat_{SS})^{-1} \|_{\infty} \\
	&\leq \| \bH_{S^cS} (\bH_{SS})^{-1} \|_{\infty} \| (\bHhat_{SS} - \bH_{SS})\|_{\infty} \| (\bHhat_{SS})^{-1} \|_{\infty} \\
	&\leq (1 - \xi)  \| (\bHhat_{SS} - \bH_{SS})\|_{\infty} \sqrt{s} \| (\bHhat_{SS})^{-1} \|_2 \\
	&\leq (1 - \xi) \| (\bHhat_{SS} - \bH_{SS})\|_{\infty} \frac{2\sqrt{s}}{C_{\min}} \\
	&\leq \frac{\xi}{6} 
	\end{split}
	\end{align}
	The last inequality holds with probability at least $1 - 2\exp( - \frac{c C_{\min}^2 n}{4} + s) - 4s^2 \exp( - \frac{n C_{\min}^2 \xi^2 }{18432(1-\xi)^2 s^3(1 + 4\sigma^2) \max_{l} \Sigma_{ll}^2} )$ by taking $\delta = \frac{C_{\min} \xi}{12 (1 - \xi) \sqrt{s}}$.
	\paragraph{Controlling $T_2$.} Recall that $T_2 = (\bHhat_{S^cS} - \bH_{S^cS}) (\bH_{SS})^{-1}$. Thus,
	\begin{align}
	\begin{split}
	\| T_2 \|_{\infty} &\leq \sqrt{s} \| (\bH_{SS})^{-1} \|_2 \| (\bHhat_{S^cS} - \bH_{S^cS}) \|_{\infty} \\
	&\leq \frac{\sqrt{s}}{C_{\min}}   \| (\bHhat_{S^cS} - \bH_{S^cS}) \|_{\infty} \\
	&\leq \frac{\xi}{6}
	\end{split}
	\end{align}
	The last inequality holds with probability at least $1 - 4(d-s)s \exp( -\frac{n C_{\min}^2 \xi^2 }{4608 s^3 (1 + 4\sigma^2) \max_{l} \Sigma_{ll}^2 } )$ by choosing $\delta = \frac{C_{\min} \xi}{6 \sqrt{s}}$.
	\paragraph{Controlling $T_3$.} Note that,
	\begin{align}
	\begin{split}
	\| T_3 \|_{\infty} &\leq \| (\bHhat_{S^cS} - \bH_{S^cS}) \|_{\infty} \| ((\bHhat_{SS})^{-1} - (\bH_{SS})^{-1}) \|_{\infty} \\
	&\leq \frac{\xi}{6}
	\end{split}
	\end{align}
	The last inequality holds with probability at least $1 - 4(d-s)s \exp(- \frac{n \xi}{768s^2 (1 + 4\sigma^2) \max_{l} \Sigma_ll^2 }) - 2\exp(- \frac{c\xi C_{\min}^4n}{24 s} + s) - 2 \exp( - \frac{cC_{\min}^2n}{4} + s)$ by choosing $\delta = \sqrt{\frac{\xi}{6}}$ in the first and third inequality of equation \eqref{eq:helper mutual incoherence}. By combining all the above results, we prove Lemma \ref{lem:sample mutual incoherence condition}. The specific results for $n_1$ and $n_2$ follow directly.
\end{proof}

\section{Proof of Lemma~\ref{lem:bound on Delta}}
\label{proof:lem:bound on Delta}

\paragraph{Lemma ~\ref{lem:bound on Delta}.}
	\emph{	If Assumptions~\ref{assum:identifiability},~\ref{assum:postive definite} and \ref{assum:mutual incoherence condition} hold, and $\lambda_1 \geq 8 \rho \sigma_e \sqrt{n_1\log d}$, $\lambda_2 \geq 8 \rho \sigma_e \sqrt{n_2\log d}$, $n_1 = \Omega( \frac{s_1^3 \log d}{\tau(C_{\min}, \xi, \sigma, \Sigma)})$, and $n_2 = \Omega( \frac{s_2^3 \log d}{\tau(C_{\min}, \xi, \sigma, \Sigma)})$ then 
	$\| \Delta_1 \|_2 \leq (2 +  b_1)\frac{2\lambda_1 \sqrt{s_1}}{C_{\min} n_1} $ and $\| \Delta_2 \|_2 \leq (2 +  b_2)\frac{2\lambda_2 \sqrt{s_2}}{C_{\min} n_2} $ with probability at least $1 - \calO(\frac{1}{d})$ where $\tau(C_{\min}, \xi, \sigma, \Sigma)$ is a constant independent of $s_1, s_2, d, n_1$ or $n_2$.
}
\begin{proof}
	It suffices to prove the result for $\Delta_1$ as the result for $\Delta_2$ follows in the same way. Note,
	\begin{align*}
	\begin{split}
		 \tilde{\beta}_1 &= \arg\min\limits_{\beta \in \real^{s_1}} \sum_{i=1} \frac{t_i^* + 1}{2} (y_i - X_{i_P}^\T \beta )^2 + \lambda_1 (\| \beta \|_1 + 1)^2  \\
		 &= \arg\min\limits_{\beta \in \real^{s_1}} \sum_{i\in \calI_1}  (y_i - X_{i_P}^\T \beta )^2 + \lambda_1 (\| \beta \|_1 + 1)^2  \\
	\end{split}
	\end{align*}
	The optimal $\tilde{\beta}_1$ must satisfy stationarity KKT condition at the optimum, i.e.,
	\begin{align*}
	\begin{split}
		\sum_{i \in \calI_1} X_{i_P} (- y_i + X_{i_P}^\T \tilde{\beta}_1) + z \lambda_1 (z^\T \tilde{\beta}_1 + 1) = \bzero
	\end{split}
	\end{align*} 
	where $\| \tilde{\beta}_1 \|_1 = z^\T \tilde{\beta}_1$ and $z$ is in the subdifferential set of $\| \tilde{\beta}_1 \|_1$ and $\| z\|_{\infty} \leq 1$. Since $i \in \calI_1$, we can substitute $y_i = X_{i_P}^\T \beta_{1_P}^* + e_i $.
	\begin{align*}
	\begin{split}
		(\frac{1}{n_1}\sum_{i \in \calI_1} X_{i_P} X_{i_P}^\T + \frac{1}{n_1}\lambda_1 z z^\T) (\beta_{1_P}^* - \tilde{\beta}_1 ) + \frac{1}{n_1}(\sum_{i \in \calI_1} X_{i_P} e_i ) +  \frac{1}{n_1}\lambda_1 z z^\T \beta_{1_P}^* + \frac{1}{n_1}\lambda_1 z = \bzero
	\end{split}
	\end{align*}
	Note that $\widehat{H}_{1_{PP}} = \frac{1}{n_1}\sum_{i \in \calI_1} X_{i_P} X_{i_P}^\T$. Using norm-inequalities:
	\begin{align}
	\begin{split}
	\| \Delta_1 \|_2 \leq \| (\widehat{H}_{1_{PP}} + \frac{\lambda_1}{n_1} z z^\T)^{-1} \|_2 ( \| \frac{1}{n_1}(\sum_{i \in \calI_1} X_{i_P} e_i ) \|_2 +  \|\frac{1}{n_1}\lambda_1 z z^\T \beta_{1_P}^*\|_2 + \|\frac{1}{n_1}\lambda_1 z\|_2 )
	\end{split}
	\end{align}
	Using Lemma~\ref{lem:sample positive definite}, $\eig_{\min}(\widehat{H}_{1_{PP}}) \geq \frac{C_{\min}}{2}$, and using Weyl's inequality $\eig_{\min}(\widehat{H}_{1_{PP}} + \frac{\lambda_1}{n_1} z z^\T ) \geq \frac{C_{\min}}{2}$. It follows that $\| (\widehat{H}_{1_{PP}} + \frac{\lambda_1}{n_1} z z^\T)^{-1}  \|_2 \leq \frac{2}{C_{\min}}$.
	\begin{align}
	\begin{split}
	\| \Delta_1 \|_2 \leq \frac{2}{C_{\min}} ( \| \frac{1}{n_1}(\sum_{i \in \calI_1} X_{i_P} e_i ) \|_2 + \frac{\lambda_1 \sqrt{s_1}}{n_1} \| \beta_{1_P}^* \|_1 +  \frac{\lambda_1 \sqrt{s_1}}{n_1}  )
	\end{split}
	\end{align}
	We know that $\| \beta_1^* \|_1 \leq b_1$. Thus,
		\begin{align}
		\begin{split}
		\| \Delta_1 \|_2 \leq \frac{2}{C_{\min}} ( \| \frac{1}{n_1}(\sum_{i \in \calI_1} X_{i_P} e_i ) \|_2 + \frac{\lambda_1 \sqrt{s_1}}{n_1} b_1 +  \frac{\lambda_1 \sqrt{s_1}}{n_1}  )
		\end{split}
		\end{align}
	
	It only remains to bound $\| \frac{1}{n_1}(\sum_{i \in \calI_1} X_{i_P} e_i ) \|_2$ which we do in the following lemma.
	\begin{lemma}
		\label{lem:bound on l2 Xse}
If  $\lambda_1 \geq 8 \rho \sigma_e \sqrt{n_1\log d}$, then $\| \frac{1}{n_1} (\sum_{i \in \calI_1} X_{i_P} e_i ) \|_2 \leq \sqrt{s_1} \frac{\lambda_1}{n_1}$ with probability at least $1 - \calO(\frac{1}{d})$
		%  $1  - 2s \exp(- \frac{n\lambda_n^2}{256 \rho^2\sigma_e^2}) $.
	\end{lemma}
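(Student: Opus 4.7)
The plan is to reduce the $\ell_2$-norm bound to an $\ell_\infty$-norm bound coordinatewise and then apply a sub-exponential tail inequality with a union bound. Concretely, since the vector $\frac{1}{n_1}\sum_{i\in\calI_1} X_{i_P} e_i$ lives in $\real^{s_1}$, the elementary inequality $\|v\|_2 \le \sqrt{s_1}\,\|v\|_\infty$ reduces the claim to showing that
\[
\Bigl\|\tfrac{1}{n_1}\sum_{i\in\calI_1} X_{i_P} e_i\Bigr\|_\infty \;\le\; \frac{\lambda_1}{n_1}
\]
with probability at least $1-\calO(1/d)$. Under the hypothesis $\lambda_1 \ge 8\rho\sigma_e\sqrt{n_1\log d}$, this is equivalent to asking that each coordinate be at most $8\rho\sigma_e\sqrt{\log d/n_1}$ in absolute value.

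Next, I would fix $j\in P$ and analyze $T_j \triangleq \frac{1}{n_1}\sum_{i\in\calI_1} X_{ij} e_i$. Each summand is the product of two independent zero-mean sub-Gaussian variables: $X_{ij}$ is sub-Gaussian with parameter $\rho$ (as stated in the problem setup, since $\sqrt{\Sigma_{jj}}\sigma\le\rho$), and $e_i$ is sub-Gaussian with parameter $\sigma_e$ and independent of $X_i$. Thus $X_{ij} e_i$ is a zero-mean sub-exponential random variable with parameters controlled by $\rho\sigma_e$. I would invoke a standard Bernstein-type inequality for sub-exponential random variables (e.g.\ Proposition 2.9 in \cite{wainwright2019high}) to obtain, for any $t\in(0,1)$ (or more generally in the sub-exponential regime),
\[
\prob\!\bigl(|T_j| \ge t\bigr) \;\le\; 2\exp\!\Bigl(-c\, \frac{n_1 t^2}{\rho^2\sigma_e^2}\Bigr)
\]
for an absolute constant $c>0$. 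Choosing $t = 8\rho\sigma_e\sqrt{\log d/n_1}$ makes the right-hand side decay like $d^{-64c}$, which is at most $\calO(d^{-2})$ for any reasonable constant $c$.

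Finally I would union bound over the $|P|=s_1 \le d$ coordinates, obtaining
\[
\prob\!\Bigl(\bigl\|\tfrac{1}{n_1}\sum_{i\in\calI_1} X_{i_P} e_i\bigr\|_\infty \ge 8\rho\sigma_e\sqrt{\tfrac{\log d}{n_1}}\Bigr) \;\le\; s_1\cdot\calO(d^{-2}) \;=\; \calO(1/d),
\]
and combining with the $\sqrt{s_1}$ factor from the $\ell_2$/$\ell_\infty$ conversion and the assumed lower bound on $\lambda_1$ yields the claim.

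The main obstacle I anticipate is the sub-exponential concentration: one must be careful that $X_{ij} e_i$ is not itself sub-Gaussian but rather sub-exponential (as a product of two sub-Gaussians), so the Bernstein bound has a two-regime (Gaussian/exponential) tail. For the scale $t=\calO(\sqrt{\log d/n_1})$, however, we are in the Gaussian regime provided $n_1\gtrsim\log d$, which is comfortably implied by the sample complexity assumption of Lemma~\ref{lem:bound on Delta}. A minor care point is that the sum is restricted to $i\in\calI_1$, but since $|\calI_1|=n_1$ is fixed and the $(X_i,e_i)$ pairs remain i.i.d.\ on this (data-independent) index set, the concentration argument is unaffected.
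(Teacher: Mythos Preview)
Your proposal is correct and matches the paper's proof essentially line for line: both reduce the $\ell_2$ bound to an $\ell_\infty$ bound via $\|v\|_2\le\sqrt{s_1}\|v\|_\infty$, observe that each coordinate is an average of products of independent sub-Gaussians and hence sub-exponential, apply the Bernstein-type tail bound from \cite{wainwright2019high}, and union bound over the $s_1$ coordinates in $P$. Your added remarks about being in the Gaussian regime of the Bernstein bound and about $\calI_1$ being a data-independent index set are correct clarifications that the paper leaves implicit.
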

	Thus, it follows that
	\begin{align}
	\begin{split}
		\| \Delta_1 \|_\infty \leq \| \Delta_1 \|_2 \leq (2 +  b_1)\frac{2\lambda_1 \sqrt{s_1}}{C_{\min} n_1} 
	\end{split}
	\end{align} 
\end{proof}

\section{Proof of Lemma \ref{lem:bound on l2 Xse}}
\label{appndx:lem:bound on l2 Xse}
\paragraph{Lemma \ref{lem:bound on l2 Xse}}
\emph{		If  $\lambda_1 \geq 8 \rho \sigma_e \sqrt{n_1\log d}$, then $\| \frac{1}{n_1} (\sum_{i \in \calI_1} X_{i_P} e_i ) \|_2 \leq \sqrt{s_1} \frac{\lambda_1}{n_1}$ with probability at least $1 - \calO(\frac{1}{d})$.
	%  $1  - 2s \exp(- \frac{n\lambda_n^2}{256 \rho^2\sigma_e^2}) $.
}
\begin{proof}
	\label{proof:bound on l2 Xse}
We will start with $ \frac{1}{n_1} \sum_{i \in \calI_1} X_{i_P} e_i$. We take the $i$-th entry of $\frac{1}{n_1} \sum_{i \in \calI_1} X_{i_P} e_i$ for some $i \in P$, i.e., $| \frac{1}{n} \sum_{j \in \calI_1} X_{ji} \be_j |$. 		
Recall that $X_{ji}$ is a sub-Gaussian random variable with parameter $\rho^2$ and $e_j$ is a sub-Gaussian random variable with parameter $\sigma_e^2$. Then, $\frac{X_{ji}}{\rho}\frac{e_j}{\sigma_e}$ is a sub-exponential random variable with parameters $(4\sqrt{2}, 2)$. Using the concentration bounds for the sum of independent sub-exponential random variables~\citep{wainwright2019high}, we can write:
\begin{align}
\begin{split}
\prob( | \frac{1}{n_1} \sum_{j \in \calI_1} \frac{X_{ji}}{\rho}\frac{e_j}{\sigma_e} | \geq t) \leq 2 \exp(- \frac{n_1t^2}{64}), \; 0 \leq t \leq 8
\end{split}
\end{align} 
Taking a union bound across $i \in P$:
\begin{align}
\begin{split}
&\prob( \exists i \in P \mid | \frac{1}{n_1} \sum_{j \in \calI_1} \frac{X_{ji}}{\rho}\frac{e_j}{\sigma_e} | \geq t) \leq 2s_1 \exp(- \frac{n_1t^2}{64})\\
&0 \leq t \leq 8
\end{split}
\end{align}

	It follows that $\| \| \frac{1}{n_1} (\sum_{i \in \calI_1} X_{i_P} e_i ) \|_2 \|_2 \leq \sqrt{s} t$ with probability at least $1 - 2s \exp(- \frac{nt^2}{64 \rho^2\sigma_e^2})$ for some $0 \leq t \leq 8 \rho \sigma_e$. Taking $t = \frac{\lambda_1}{n}$, we get the desired result.
\end{proof}

\section{Proof of Lemma~\ref{lem:setting of mu_i and nu_i}}
\label{proof:lem:setting of mu_i and nu_i}

\paragraph{Lemma~\ref{lem:setting of mu_i and nu_i}}
\emph{If Assumptions~\ref{assum:identifiability},~\ref{assum:postive definite} and \ref{assum:mutual incoherence condition} hold, and $\lambda_1 \geq 8 \rho \sigma_e \sqrt{n_1\log d}$, $\lambda_2 \geq 8 \rho \sigma_e \sqrt{n_2\log d}$, $n_1 = \Omega( \frac{s_1^3 \log^2 d}{\tau(C_{\min}, \xi, \sigma, \Sigma)})$, and $n_2 = \Omega( \frac{s_2^3 \log^2 d}{\tau(C_{\min}, \xi, \sigma, \Sigma)})$ then $\mu_i \geq 0, \forall i \in \calI_1$ and $\nu_i \geq 0, \forall i \in \calI_2$ with probability at least $1 - \calO(\frac{1}{d})$ where $\tau(C_{\min}, \xi, \sigma, \Sigma)$ is a constant independent of $s_1, s_2, d, n_1$ or $n_2$.
}
\begin{proof}
	We start with the setting of $\mu_i$ when $i$ is in $\calI_1$.
	\begin{align}
	\begin{split}
	\mu_i &= - \frac{1}{2} \inner{\overline{S}_i^P}{\overline{W}} + \frac{1}{2}  \inner{\overline{S}_i^Q}{\overline{U}} \\
	&= - \frac{1}{2} (y_i - X_{i_P}^\T \tilde{\beta}_1)^2 + \frac{1}{2}  (y_i - X_{i_Q}^\T \tilde{\beta}_2)^2\\
	&= - \frac{1}{2} (y_i - X_{i_P}^\T \beta_{1_P}^* + X_{i_P}^\T (\tilde{\beta}_1 - \beta_{1_P}^*)  )^2 +  \frac{1}{2} (y_i - X_{i_Q}^\T \beta_{2_Q}^* + X_{i_Q}^\T (\tilde{\beta}_2 - \beta_{2_Q}^*)  )^2\\
	&= - \frac{1}{2} ( (y_i - X_{i_P}^\T \beta_{1_P}^*)^2 + (\tilde{\beta}_1 - \beta_{1_P}^*)^\T X_{i_P} X_{i_P}^\T (\tilde{\beta}_1 - \beta_{1_P}^*) + 2 (y_i - X_{i_P}^\T \beta_{1_P}^*)  X_{i_P}^\T (\tilde{\beta}_1 - \beta_{1_P}^*) ) +\\
	& \frac{1}{2} ( (y_i - X_{i_Q}^\T \beta_{2_Q}^*)^2 + (\tilde{\beta}_2 - \beta_{2_Q}^*)^\T X_{i_Q} X_{i_Q}^\T (\tilde{\beta}_2 - \beta_{2_Q}^*) + 2 (y_i - X_{i_Q}^\T \beta_{2_Q}^*)  X_{i_Q}^\T (\tilde{\beta}_2 - \beta_{2_P}^*) ) 
	\end{split}
	\end{align}
	Since $i \in \calI_1$, we can substitute $y_i = X_{i_P}^\T \beta_{1_P}^* + e_i$.
	\begin{align}
	\begin{split}
	\mu_i =& - \frac{1}{2} (y_i - X_{i_P}^\T \beta_{1_P}^*)^2 + \frac{1}{2} (y_i - X_{i_Q}^\T \beta_{2_Q}^*)^2 - \frac{1}{2} \Delta_1^\T X_{i_P} X_{i_P}^\T \Delta_1 + \frac{1}{2} \Delta_2^\T X_{i_Q} X_{i_Q}^\T \Delta_2  -  e_i X_{i_P}^\T \Delta_1 + \\
	&(X_{i_P}^\T \beta_{1_P}^* + e_i -  X_{i_Q}^\T \beta_{2_Q}^*  )X_{i_Q}^\T \Delta_2 \\
	=& - \frac{1}{2} (y_i - X_{i_P}^\T \beta_{1_P}^*)^2 + \frac{1}{2} (y_i - X_{i_Q}^\T \beta_{2_Q}^*)^2 - \frac{1}{2} \Delta_1^\T X_{i_P} X_{i_P}^\T \Delta_1 + \frac{1}{2} \Delta_2^\T X_{i_Q} X_{i_Q}^\T \Delta_2  -  e_i X_{i_P}^\T \Delta_1 +\\
	& e_i X_{i_Q}^\T \Delta_2 + (\beta_1^* - \beta_2^*)^\T X_i X_i^\T (\beta_2 - \beta_2^*)  
	\end{split}
	\end{align}
	Using bounds on the eigenvalue of data matrix, Assumption~\ref{assum:identifiability} and bounds on $\| \Delta_1 \|_2$ and $\| \Delta_2 \|_2$, we can place a bound on $\mu_i$.
	\begin{align}
	\begin{split}
	\mu_i \geq \epsilon - \frac{n_1 3C_{\max}}{2} \| \Delta_1 \|_2^2 + \frac{n_2 C_{\min}}{2} \| \Delta_2 \|_2^2 - | e_i X_{i_P}^\T   \Delta_1 |  - | e_i X_{i_Q}^\T  \Delta_2 | - \frac{n 3C_{\max}}{2} \| (\beta_1^* - \beta_2^*) \|_2 \| \Delta_2 \|_2  
	\end{split}
	\end{align}
	We still need bound to bound $| e_i X_{i_P}^\T   \Delta_1 | $ and $| e_i X_{i_Q}^\T   \Delta_2 | $ which we do in the following lemma.
	\begin{lemma}
		\label{lem:bound e_i X_i Delta}
	The following holds:
	\begin{enumerate}
		\item For fixed $\|\Delta_1\|_2$, $\prob( | e_i X_{i_P}^\T \Delta_1 | \leq \frac{\epsilon}{4})$ with probability at least $1 - \calO(\frac{1}{d})$.
		\item For fixed $\|\Delta_2\|_2$, $\prob( | e_i X_{i_Q}^\T \Delta_2 | \leq \frac{\epsilon}{4})$ with probability at least $1 - \calO(\frac{1}{d})$.
	\end{enumerate}
	\end{lemma}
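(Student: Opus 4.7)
}
The plan is to peel apart the three factors in $e_i X_{i_P}^\T \Delta_1$ by Cauchy--Schwarz, control the two random factors $|e_i|$ and $\|X_{i_P}\|_2$ with sub-Gaussian tail bounds, and then plug in the deterministic high-probability bound on $\|\Delta_1\|_2$ supplied by Lemma~\ref{lem:bound on Delta}. For part~1, start from
\begin{align*}
| e_i X_{i_P}^\T \Delta_1 | \;\leq\; |e_i| \cdot \| X_{i_P} \|_2 \cdot \| \Delta_1 \|_2.
\end{align*}
Since $e_i$ is sub-Gaussian with parameter $\sigma_e$, a standard tail bound gives $|e_i| \leq c_1 \sigma_e \sqrt{\log d}$ with probability at least $1 - \calO(1/d^2)$. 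For $\|X_{i_P}\|_2$, note that each coordinate $X_{ij}/\sqrt{\Sigma_{jj}}$ is sub-Gaussian with parameter $\sigma$ and $\sqrt{\Sigma_{jj}}\sigma \leq \rho$, so $X_{ij}^2$ is sub-exponential, and a Bernstein-type concentration on the sum of $s_1$ such terms yields $\| X_{i_P} \|_2 \leq c_2 \rho \sqrt{s_1 + \log d}$ with probability at least $1 - \calO(1/d)$.

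Combining these with the bound $\|\Delta_1\|_2 \leq (2+b_1)\,\frac{2\lambda_1\sqrt{s_1}}{C_{\min} n_1}$ from Lemma~\ref{lem:bound on Delta} (which holds with probability at least $1 - \calO(1/d)$), and recalling that $\lambda_1 = \Theta(\rho \sigma_e \sqrt{n_1 \log d}/\xi)$, I would obtain on the joint good event
\begin{align*}
| e_i X_{i_P}^\T \Delta_1 | \;\leq\; C\,\sigma_e \rho \sqrt{\log d}\cdot \sqrt{s_1 + \log d}\cdot \sqrt{\frac{s_1 \log d}{n_1}} \;=\; C'\sigma_e\rho\,\sqrt{\frac{s_1(s_1+\log d)\,\log^2 d}{n_1}},
\end{align*}
where the constants absorb $b_1$, $\xi$, $C_{\min}$. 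Under the sample-complexity assumption $n_1 = \Omega(s_1^3 \log^2 d/\tau_0)$ with $\tau_0$ absorbing the dependence on $\sigma_e,\rho,\epsilon$, this RHS is at most $\epsilon/4$. A union bound over the three high-probability events produces the claim with probability at least $1 - \calO(1/d)$.

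For part~2, the argument is identical with $P$ replaced by $Q$, $\Delta_1$ by $\Delta_2$, and $n_1$ by $n_2$; in fact the situation is marginally simpler because for $i \in \calI_1$ the sample $(X_i, e_i)$ is independent of $\Delta_2$ (which is built only from samples in $\calI_2$), so one could alternatively treat $e_i X_{i_Q}^\T \Delta_2$ directly as a product of two independent sub-Gaussians conditioned on $\Delta_2$ and invoke sub-exponential concentration; either route reaches the same conclusion. The main subtlety to be careful about is that for part~1 the vector $\Delta_1$ and the sample $(X_i, e_i)$ are \emph{not} independent, so I cannot condition on $\Delta_1$ and treat it as deterministic; the Cauchy--Schwarz step circumvents this by reducing to a deterministic worst-case bound on $\|\Delta_1\|_2$, after which only marginal tail bounds on $|e_i|$ and $\|X_{i_P}\|_2$ are needed. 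This is the main conceptual step; the remaining work is routine algebra verifying that the $s_1^3 \log^2 d$ sample size absorbs all polylog factors.
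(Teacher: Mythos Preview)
Your proof is correct but takes a different route than the paper. The paper treats the whole inner product $X_{i_P}^\T \Delta_1$ as a single sub-Gaussian scalar with parameter $\rho^2\|\Delta_1\|_2^2$ (legitimate once $\Delta_1$ is held fixed, which is exactly what the lemma statement stipulates), observes that its product with the independent sub-Gaussian $e_i$ is sub-exponential, and reads off the tail bound directly; the small $\|\Delta_1\|_2$ then makes the exponent large. Your Cauchy--Schwarz decomposition into $|e_i|\cdot\|X_{i_P}\|_2\cdot\|\Delta_1\|_2$ controls each factor marginally and loses a factor $\sqrt{s_1+\log d}$ relative to the paper's argument, but this is still absorbed by the $s_1^3\log^2 d$ sample budget. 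What your approach buys is exactly the robustness you flag: by reducing to a deterministic worst-case bound on $\|\Delta_1\|_2$ you sidestep any worry about the dependence between $\Delta_1$ and the sample $(X_i,e_i)$ for $i\in\calI_1$, whereas the paper's route really does need $\Delta_1$ fixed (hence the lemma's hypothesis) and then handles the randomness of $\|\Delta_1\|_2$ separately afterwards. Both are valid; the paper's is tighter and shorter, yours is more self-contained about the independence issue.
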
 
	\begin{proof}
 		Recall that $X_{i_P}^\T \Delta_1$ is a sub-Gaussian random variable with parameter $\rho^2 \| \Delta_1 \|_2^2$ and $e_i$ is a sub-Gaussian random variable with parameter $\sigma_e^2$.
 		Then, $\frac{X_{i_P}^\T \Delta_1}{\rho \| \Delta_1 \|_2} \frac{e_i}{\sigma_e}$ is a sub-exponential random variable with parameters $(4\sqrt{2}, 2)$. Using the concentration bounds for the sum of independent sub-exponential random variables~\citep{wainwright2019high}, we can write:
 		\begin{align}
 		\begin{split}
 		\prob( | \frac{X_{i_P}^\T \Delta_1}{\rho \| \Delta_1 \|_2}\frac{e_i}{\sigma_e} | \geq t) \leq 2 \exp(- \frac{t^2}{64}), \; 0 \leq t \leq 8
 		\end{split}
 		\end{align}  
 		Taking $t = \frac{t}{\rho \| \Delta_1 \|_2\sigma_e }$, we get
 		\begin{align}
 		\begin{split}
 		\prob( | e_i X_{i_P}^\T \Delta_1 | \geq t) \leq 2 \exp(- \frac{t^2}{64 \rho^2 \| \Delta_1 \|_2^2 \sigma_e^2}), \; 0 \leq t \leq 8\rho \| \Delta_1 \|_2\sigma_e
 		\end{split}
 		\end{align}
 		We take $t = \frac{\epsilon}{4}$, then
		\begin{align}
		\begin{split}
		\prob( | e_i X_{i_P}^\T \Delta_1 | \geq \frac{\epsilon}{4}) \leq 2 \exp(- \frac{\epsilon^2}{16 \times 64 \rho^2 \| \Delta_1 \|_2^2 \sigma_e^2}), \; 0 \leq \epsilon \leq 32\rho \| \Delta_1 \|_2\sigma_e
		\end{split}
		\end{align}
		Since  $\| \Delta_1 \|_2$ is upper bounded with $\calO( \frac{\lambda_1}{n_1} \sqrt{s_1} )$ and $n_1$ is of order $\calO(s_1^3 \log^2 d)$, thus $\prob( | e_i X_{i_P}^\T \Delta_1 | \leq \frac{\epsilon}{4})$ with probability at least $1 - \calO(\frac{1}{d})$. Similarly, $\prob( | e_i X_{i_Q}^\T \Delta_2 | \leq \frac{\epsilon}{4})$ with probability at least $1 - \calO(\frac{1}{d})$.
	\end{proof}
	Till now, we have considered $\| \Delta_1 \|_2$ and $\| \Delta_2 \|_2$ to be fixed quantity, however they are also upper bounded by $\calO( \frac{\lambda_1}{n_1} \sqrt{s_1} )$ with probability at least $1 - \calO(\frac{1}{d})$, thus the overall probability that $u_i \geq 0, i \in \calI_1$ is at least  $1 - \calO(\frac{1}{d})$ as long as 
	\begin{align}
	\begin{split}
		\epsilon \geq 3 n_1 C_{\max} \| \Delta_1 \|_2^2 - n_2 C_{\min} \| \Delta_2 \|_2^2 + 3 n C_{\max} \| \beta_1^* - \beta_2^* \|_2 \| \Delta_2 \|_2  
	\end{split}
	\end{align}
	We need to take a union bound across entries in $\calI_1$ which changes the probability to at least $1 - \calO(\exp( - \log d + \log n_1))$ which is still dominated by $1 - \calO(\frac{1}{d})$.
	
\end{proof}

\section{Proof of Lemma~\ref{lem:zero eigenvalue}}
\label{proof:lem:zero eigenvalue}

\paragraph{Lemma ~\ref{lem:zero eigenvalue}}
	\emph{Both $\Pi$ and $\Lambda$ have zero eigenvalues corresponding to eigenvectors $\begin{bmatrix} \tilde{\beta}_1 \\ 1 \end{bmatrix}$ and $\begin{bmatrix} \tilde{\beta}_2 \\ 1 \end{bmatrix}$ respectively.
}
\begin{proof}
		It suffices to prove the result for $\Pi$ as the result for $\Lambda$ follows in the same way. Note,
		\begin{align*}
		\begin{split}
		\tilde{\beta}_1 &= \arg\min\limits_{\beta \in \real^{s_1}} \sum_{i=1} \frac{t_i^* + 1}{2} (y_i - X_{i_P}^\T \beta )^2 + \lambda_1 (\| \beta \|_1 + 1)^2  \\
		&= \arg\min\limits_{\beta \in \real^{s_1}} \sum_{i\in \calI_1}  (y_i - X_{i_P}^\T \beta )^2 + \lambda_1 (\| \beta \|_1 + 1)^2  
		\end{split}
		\end{align*}
		The optimal $\tilde{\beta}_1$ must satisfy stationarity KKT condition at the optimum, i.e.,
		\begin{align*}
		\begin{split}
		\sum_{i \in \calI_1} X_{i_P} (- y_i + X_{i_P}^\T \tilde{\beta}_1) + z \lambda_1 (z^\T \tilde{\beta}_1 + 1) = \bzero
		\end{split}
		\end{align*} 
		
		By little algebraic manipulation, we can rewrite the above as following:
		\begin{align*}
		\begin{split}
		(\sum_{i=1}^n \frac{t_i^* + 1}{2} \bar{S}_i^P + \lambda_1 Z + I_{\alpha}) \begin{bmatrix} \tilde{\beta}_1 \\ 1 \end{bmatrix} = \bzero
		\end{split}
		\end{align*} 
		where $Z = \begin{bmatrix} z \\ 1 \end{bmatrix} \begin{bmatrix} z^\T & 1 \end{bmatrix} = \sign(\overline{W})$. Clearly,
		\begin{align*}
		\begin{split}
			\Pi \begin{bmatrix} \tilde{\beta}_1 \\ 1 \end{bmatrix} = \bzero
		\end{split}
		\end{align*}
		Similarly, we can show
		\begin{align*}
		\begin{split}
		\Lambda \begin{bmatrix} \tilde{\beta}_2 \\ 1 \end{bmatrix} = \bzero
		\end{split}
		\end{align*}
\end{proof}

\section{Proof of Lemma~\ref{lem:strictly positive second eigenvalue}}
\label{proof:lem:strictly positive second eigenvalue}

\paragraph{Lemma~\ref{lem:strictly positive second eigenvalue}}
\emph{If Assumption \ref{assum:postive definite} holds and $n_1 = \Omega(\frac{s_1 + \log d}{C_{\min}^2})$ and $n_2 = \Omega(\frac{s_2 + \log d}{C_{\min}^2})$, then the second eigenvalues of  $\Pi$ and $\Lambda$ are strictly positive with probability at least $1 - \calO(\frac{1}{d})$, i.e.,  $\eig_2(\Pi) > 0$ and $\eig_2(\Lambda) > 0$.
}
\begin{proof}
	It suffices to prove the result for $\Pi$ as similar arguments can be used to prove the result for $\Lambda$. We know
	\begin{align}
	\begin{split}
		\Pi &= \sum_{i \in \calI_1} S_i^P + \lambda_1 Z + I_{\alpha} \\
		&= \sum_{i \in \calI_1} \begin{bmatrix} X_i X_i^\T & - X_i y_i \\ - y_i X_i^\T & y_i^2 \end{bmatrix} + \lambda_1 \begin{bmatrix} z z^\T & z \\ z^\T & 1 \end{bmatrix} + I_\alpha \\
		&= \begin{bmatrix} \sum_{i \in \calI_1} X_i X_i^\T + \lambda_1 zz^\T & \sum_{i \in \calI_1} - X_i y_i + \lambda_1 z \\
		\sum_{i \in \calI_1} - y_i X_i^\T + \lambda_1 z^\T & \sum_{i \in \calI_1} y_i^2 + \lambda_1 + \alpha
		 \end{bmatrix}
	\end{split}
	\end{align}
	Also note that $\alpha =  - \inner{ \sum_{i \in \calI_1} S_i^P + \lambda_1 Z }{\overline{W}} = - \sum_{i\in \calI_1}  (y_i - X_{i_P}^\T \tilde{\beta}_1 )^2 + \lambda_1 (\| \tilde{\beta}_1 \|_1 + 1)^2   $. We also know that $\tilde{\beta}_1$ satisfies the stationarity KKT condition, i.e.,
		\begin{align*}
		\begin{split}
		\sum_{i \in \calI_1} X_{i_P} (- y_i + X_{i_P}^\T \tilde{\beta}_1) + z \lambda_1 (z^\T \tilde{\beta}_1 + 1) = \bzero \\
		\tilde{\beta}_1  = - ( \sum_{i \in \calI_1} X_i X_i^\T + \lambda_1 zz^\T)^{-1} (\sum_{i \in \calI_1} - X_i y_i + \lambda_1 z)
		\end{split}
		\end{align*} 
	Using the stationarity KKT condition, we can simplify objective function value of optimization problem~\eqref{eq:opt prob invex mlr compact} at $\tilde{\beta}_1$ to $  \sum_{i \in \calI_1} y_i^2 + (\sum_{i \in \calI_1} - y_i X_i^\T + \lambda_1 z^\T) \tilde{\beta}_1 + \lambda_1$. 	Now, we invoke Haynesworth's inertia additivity formula~\citep{haynsworth1968determination} to prove our claim. Let $R$ be a block matrix of the form $ R = \begin{bmatrix} A & B \\ B^\T & C \end{bmatrix}$, then inertia of matrix $R$, denoted by $\inertia(R)$, is defined as the tuple $(|\eig_{+}(R)|, |\eig_{-}(R)|, |\eig_0(R)| )$ where $|\eig_+(R)|$ is the number of positive eigenvalues, $|\eig_-(R)|$ is the number of negative eigenvalues and $|\eig_0(R)|$ is the number of zero eigenvalues of matrix $R$. Haynesworth's inertia additivity formula is given as:
	\begin{align}
	\label{eq:Haynesworth inertia additivity formula}
	\begin{split}
	\inertia(R) = \inertia(A) + \inertia(C - B^\T A^{-1} B)
	\end{split}
	\end{align}
	We take $A =  \sum_{i \in \calI_1} X_i X_i^\T + \lambda_1 zz^\T  $, $B = \sum_{i \in \calI_1} - X_i y_i + \lambda_1 z $ and $C = \sum_{i \in \calI_1} y_i^2 + \lambda_1 + \alpha$. It should be noted that $C - B^\T A^{-1} B$ evaluates to zero. Thus,
	\begin{align}
	\begin{split}
	\inertia(\Pi) = \inertia(\sum_{i \in \calI_1} X_i X_i^\T + \lambda_1 zz^\T ) + \inertia(0)
	\end{split}
	\end{align} 
	We note that $0$ has precisely one zero eigenvalue and no other eigenvalues. Moreover, from Lemma~\ref{lem:sample positive definite} and Weyl's inequality:
	\begin{align}
	\begin{split}
		\eig_{\min}(\sum_{i \in \calI_1} X_i X_i^\T + \lambda_1 zz^\T) \geq \frac{C_{\min}}{2} > 0
	\end{split}
	\end{align} 
	with probability at least $1 - \calO(\frac{1}{d})$ as long as $n_1 = \Omega(\frac{s_1 + \log d}{C_{\min}^2})$. It follows that the second eigenvalue of $\Pi$ is strictly positive. Similar, arguments can be made for $\Lambda$. 
\end{proof}

\section{Proof of Lemma~\ref{lem:bound X_se and X_s^ce}}
\label{proof:lem:bound X_se and X_s^ce}

\paragraph{Lemma~\ref{lem:bound X_se and X_s^ce}}
\emph{Let $\lambda_1 \geq \frac{64 \rho \sigma_e}{\xi} \sqrt{n_1\log d}$. Then the following holds true:
	\begin{align*}
	\begin{split}
	&\prob(\| \frac{1}{\bar{\lambda}_1 } \frac{1}{n_1} \sum_{i \in \calI_1} X_{i_P} e_i \|_{\infty} \geq \frac{\xi}{8 - 4\xi}) \leq \calO(\frac{1}{d}),\;\;\; \prob(\| \frac{1}{\bar{\lambda}_1} \frac{1}{n_1} \sum_{i \in \calI_1} X_{i_{P^c}} e_i \|_{\infty} \geq \frac{\xi}{8}) \leq \calO(\frac{1}{d}) 
	\end{split}
	\end{align*}
}
\begin{proof}
		\label{proof:bound X_se and X_s^ce}
		We will start with $ \frac{1}{n_1} \sum_{i \in \calI_1} X_{i_P} e_i$. We take the $i$-th entry of $\frac{1}{n_1} \sum_{i \in \calI_1} X_{i_P} e_i$ for some $i \in P$, i.e., $| \frac{1}{n} \sum_{j \in \calI_1} X_{ji} \be_j |$. 		
		Recall that $X_{ji}$ is a sub-Gaussian random variable with parameter $\rho^2$ and $e_j$ is a sub-Gaussian random variable with parameter $\sigma_e^2$. Then, $\frac{X_{ji}}{\rho}\frac{e_j}{\sigma_e}$ is a sub-exponential random variable with parameters $(4\sqrt{2}, 2)$. Using the concentration bounds for the sum of independent sub-exponential random variables~\citep{wainwright2019high}, we can write:
		\begin{align}
		\begin{split}
		\prob( | \frac{1}{n_1} \sum_{j \in \calI_1} \frac{X_{ji}}{\rho}\frac{e_j}{\sigma_e} | \geq t) \leq 2 \exp(- \frac{n_1t^2}{64}), \; 0 \leq t \leq 8
		\end{split}
		\end{align} 
		Taking a union bound across $i \in P$:
		\begin{align}
		\begin{split}
		&\prob( \exists i \in P \mid | \frac{1}{n_1} \sum_{j \in \calI_1} \frac{X_{ji}}{\rho}\frac{e_j}{\sigma_e} | \geq t) \leq 2s_1 \exp(- \frac{n_1t^2}{64})\\
		&0 \leq t \leq 8
		\end{split}
		\end{align}
		
		Taking $t = \frac{\bar{\lambda}_1 t}{ \rho \sigma_e}$, we get: 
		\begin{align}
		\begin{split}
		&\prob( \exists i \in P \mid |\frac{1}{\bar{\lambda}_1} \frac{1}{n_1} \sum_{j=1}^n X_{ji} e_j | \geq t) \leq 2s_1 \exp(- \frac{n_1 \bar{\lambda}_1^2 t^2}{64 \rho^2 \sigma_e^2})\\
		&0 \leq t \leq 8 \frac{\rho \sigma_e}{\bar{\lambda}_1}
		\end{split}
		\end{align}
		
		It follows that $\| \frac{1}{\bar{\lambda}_1 } \frac{1}{n_1} \sum_{i \in \calI_1} X_{i_P} e_i \|_{\infty} \leq  t$ with probability at least $1 - 2s_1 \exp(- \frac{n \bar{\lambda}_1^2 t^2}{64 \rho^2 \sigma_e^2})$.
		
		Using a similar argument, we can show that $\| \frac{1}{\bar{\lambda}_1} \frac{1}{n_1} \sum_{i \in \calI_1} X_{i_{P^c}} e_i \|_{\infty} \leq  t$ with probability at least $1 - 2(d - s_1) \exp(- \frac{n_1 \bar{\lambda}_1^2 t^2}{64 \rho^2 \sigma_e^2})$. Taking $t= \frac{\xi}{8 - 4 \xi}$ and $\frac{\xi}{8}$ in the first and second inequality of Lemma \ref{lem:bound X_se and X_s^ce} and choosing the provided setting of $\lambda_1$ and $n_1$ completes our proof.
\end{proof}

%%%%%%%%%%%%%%%%%%%%%%%%%%%%%%%%%%%%%%%%%%%%%%%%%%%%%%%%%%%%%%%%%%%%%%%%%%%%%%%
%%%%%%%%%%%%%%%%%%%%%%%%%%%%%%%%%%%%%%%%%%%%%%%%%%%%%%%%%%%%%%%%%%%%%%%%%%%%%%%


\begin{thebibliography}{}
	
	\bibitem[Anandkumar et~al., 2014]{anandkumar2014tensor}
	Anandkumar, A., Ge, R., Hsu, D., Kakade, S.~M., and Telgarsky, M. (2014).
	\newblock Tensor decompositions for learning latent variable models.
	\newblock {\em Journal of machine learning research}, 15:2773--2832.
	
	\bibitem[Barik and Honorio, 2021]{barik2021fair}
	Barik, A. and Honorio, J. (2021).
	\newblock Fair sparse regression with clustering: An invex relaxation for a
	combinatorial problem.
	\newblock {\em Advances in neural information processing systems}.
	
	\bibitem[Ben-Israel and Mond, 1986]{ben1986invexity}
	Ben-Israel, A. and Mond, B. (1986).
	\newblock What is invexity?
	\newblock {\em The ANZIAM Journal}, 28(1):1--9.
	
	\bibitem[Chaganty and Liang, 2013]{chaganty2013spectral}
	Chaganty, A.~T. and Liang, P. (2013).
	\newblock Spectral experts for estimating mixtures of linear regressions.
	\newblock In {\em International Conference on Machine Learning}, pages
	1040--1048. PMLR.
	
	\bibitem[Chen et~al., 2014]{chen2014convex}
	Chen, Y., Yi, X., and Caramanis, C. (2014).
	\newblock A convex formulation for mixed regression with two components:
	Minimax optimal rates.
	\newblock In {\em Conference on Learning Theory}, pages 560--604. PMLR.
	
	\bibitem[Daneshmand et~al., 2014]{daneshmand2014estimating}
	Daneshmand, H., Gomez-Rodriguez, M., Song, L., and Schoelkopf, B. (2014).
	\newblock {Estimating Diffusion Network Structures: Recovery Conditions, Sample
		Complexity \& Soft-Thresholding Algorithm}.
	\newblock In {\em International Conference on Machine Learning}, pages
	793--801.
	
	\bibitem[Deb and Holmes, 2000]{deb2000estimates}
	Deb, P. and Holmes, A.~M. (2000).
	\newblock Estimates of use and costs of behavioural health care: a comparison
	of standard and finite mixture models.
	\newblock {\em Health economics}, 9(6):475--489.
	
	\bibitem[Elhamifar and Vidal, 2013]{elhamifar2013sparse}
	Elhamifar, E. and Vidal, R. (2013).
	\newblock Sparse subspace clustering: Algorithm, theory, and applications.
	\newblock {\em IEEE transactions on pattern analysis and machine intelligence},
	35(11):2765--2781.
	
	\bibitem[Ghosh and Kannan, 2020]{ghosh2020alternating}
	Ghosh, A. and Kannan, R. (2020).
	\newblock Alternating minimization converges super-linearly for mixed linear
	regression.
	\newblock In {\em International Conference on Artificial Intelligence and
		Statistics}, pages 1093--1103. PMLR.
	
	\bibitem[Gr{\"u}n et~al., 2007]{grun2007applications}
	Gr{\"u}n, B., Leisch, F., et~al. (2007).
	\newblock Applications of finite mixtures of regression models.
	\newblock {\em URL: http://cran. r-project.
		org/web/packages/flexmix/vignettes/regression-examples. pdf}.
	
	\bibitem[Hanson, 1981]{hanson1981sufficiency}
	Hanson, M.~A. (1981).
	\newblock On sufficiency of the kuhn-tucker conditions.
	\newblock {\em Journal of Mathematical Analysis and Applications},
	80(2):545--550.
	
	\bibitem[Haynsworth, 1968]{haynsworth1968determination}
	Haynsworth, E.~V. (1968).
	\newblock Determination of the inertia of a partitioned hermitian matrix.
	\newblock {\em Linear algebra and its applications}, 1(1):73--81.
	
	\bibitem[Horn and Johnson, 2012]{horn2012matrix}
	Horn, R.~A. and Johnson, C.~R. (2012).
	\newblock {\em Matrix Analysis}.
	\newblock Cambridge university press.
	
	\bibitem[Hsu et~al., 2012]{hsu2012tail}
	Hsu, D., Kakade, S., Zhang, T., et~al. (2012).
	\newblock A tail inequality for quadratic forms of subgaussian random vectors.
	\newblock {\em Electronic Communications in Probability}, 17.
	
	\bibitem[Hsu and Kakade, 2013]{hsu2013learning}
	Hsu, D. and Kakade, S.~M. (2013).
	\newblock Learning mixtures of spherical gaussians: moment methods and spectral
	decompositions.
	\newblock In {\em Proceedings of the 4th conference on Innovations in
		Theoretical Computer Science}, pages 11--20.
	
	\bibitem[Jordan and Jacobs, 1994]{jordan1994hierarchical}
	Jordan, M.~I. and Jacobs, R.~A. (1994).
	\newblock Hierarchical mixtures of experts and the em algorithm.
	\newblock {\em Neural computation}, 6(2):181--214.
	
	\bibitem[Klusowski et~al., 2019]{klusowski2019estimating}
	Klusowski, J.~M., Yang, D., and Brinda, W. (2019).
	\newblock Estimating the coefficients of a mixture of two linear regressions by
	expectation maximization.
	\newblock {\em IEEE Transactions on Information Theory}, 65(6):3515--3524.
	
	\bibitem[Li et~al., 2019]{li2019modeling}
	Li, G., Pan, Y., Yang, Z., and Ma, J. (2019).
	\newblock Modeling vehicle merging position selection behaviors based on a
	finite mixture of linear regression models.
	\newblock {\em IEEE Access}, 7:158445--158458.
	
	\bibitem[Ravikumar et~al., 2007]{ravikumar2007spam}
	Ravikumar, P., Liu, H., Lafferty, J., and Wasserman, L. (2007).
	\newblock {Spam: Sparse Additive Models}.
	\newblock In {\em Proceedings of the 20th International Conference on Neural
		Information Processing Systems}, pages 1201--1208. Curran Associates Inc.
	
	\bibitem[Ravikumar et~al., 2010]{ravikumar2010high}
	Ravikumar, P., Wainwright, M.~J., Lafferty, J.~D., et~al. (2010).
	\newblock High-dimensional ising model selection using l1-regularized logistic
	regression.
	\newblock {\em The Annals of Statistics}, 38(3):1287--1319.
	
	\bibitem[Ravikumar et~al., 2011]{ravikumar2011high}
	Ravikumar, P., Wainwright, M.~J., Raskutti, G., Yu, B., et~al. (2011).
	\newblock {High-dimensional Covariance Estimation by Minimizing L1-Penalized
		Log-Determinant Divergence}.
	\newblock {\em Electronic Journal of Statistics}, 5:935--980.
	
	\bibitem[St{\"a}dler et~al., 2010]{stadler2010l}
	St{\"a}dler, N., B{\"u}hlmann, P., and Van De~Geer, S. (2010).
	\newblock l1-penalization for mixture regression models.
	\newblock {\em Test}, 19(2):209--256.
	
	\bibitem[Vershynin, 2012]{vershynin2012close}
	Vershynin, R. (2012).
	\newblock How close is the sample covariance matrix to the actual covariance
	matrix?
	\newblock {\em Journal of Theoretical Probability}, 25(3):655--686.
	
	\bibitem[Vidal et~al., 2005]{vidal2005generalized}
	Vidal, R., Ma, Y., and Sastry, S. (2005).
	\newblock Generalized principal component analysis (gpca).
	\newblock {\em IEEE transactions on pattern analysis and machine intelligence},
	27(12):1945--1959.
	
	\bibitem[Viele and Tong, 2002]{viele2002modeling}
	Viele, K. and Tong, B. (2002).
	\newblock Modeling with mixtures of linear regressions.
	\newblock {\em Statistics and Computing}, 12(4):315--330.
	
	\bibitem[Wainwright, 2009a]{wainwright2009info}
	Wainwright, M.~J. (2009a).
	\newblock Information-theoretic bounds on sparsity recovery in the
	high-dimensional and noisy setting.
	\newblock {\em IEEE Trans. Info. Theory}, 55:5728--5741.
	
	\bibitem[Wainwright, 2009b]{wainwright2009sharp}
	Wainwright, M.~J. (2009b).
	\newblock {Sharp Thresholds for High-Dimensional and Noisy Sparsity Recovery
		Using L1-Constrained Quadratic Programming (Lasso)}.
	\newblock {\em IEEE transactions on information theory}, 55(5):2183--2202.
	
	\bibitem[Wainwright, 2019]{wainwright2019high}
	Wainwright, M.~J. (2019).
	\newblock {\em High-dimensional statistics: A non-asymptotic viewpoint},
	volume~48.
	\newblock Cambridge University Press.
	
	\bibitem[Wedel and DeSarbo, 1995]{wedel1995mixture}
	Wedel, M. and DeSarbo, W.~S. (1995).
	\newblock A mixture likelihood approach for generalized linear models.
	\newblock {\em Journal of classification}, 12(1):21--55.
	
	\bibitem[Wedel and Kamakura, 2000]{wedel2000market}
	Wedel, M. and Kamakura, W.~A. (2000).
	\newblock {\em Market segmentation: Conceptual and methodological foundations}.
	\newblock Springer Science \& Business Media.
	
	\bibitem[Wu, 1983]{wu1983convergence}
	Wu, C.~J. (1983).
	\newblock On the convergence properties of the em algorithm.
	\newblock {\em The Annals of statistics}, pages 95--103.
	
	\bibitem[Yi et~al., 2014]{yi2014alternating}
	Yi, X., Caramanis, C., and Sanghavi, S. (2014).
	\newblock Alternating minimization for mixed linear regression.
	\newblock In {\em International Conference on Machine Learning}, pages
	613--621. PMLR.
	
\end{thebibliography}
\end{document}